\newcommand{\B}[1]{\textbf{#1}}
\newcommand{\Bu}{\mathcal{B}}
\renewcommand{\=}{\!=\!}
\newcommand{\bs}{\boldsymbol}
\newcommand{\mc}{\mathcal}
\newcommand{\T}{\mathcal{T}}
\newenvironment{myproof}[1][\proofname]{\proof[#1]\mbox{}}{\endproof}
\newcommand{\lb}{\underline}
\newcommand{\ub}{\overline}
\DeclareMathOperator{\mr}{MR}	
\newcommand{\feas}{\mc{X}}      
\DeclareMathOperator{\mrr}{MRR} 
\DeclareMathOperator*{\argmax}{argmax}
\DeclareMathOperator{\mincost}{mc}
\newcommand\Mark[1]{{\textsuperscript{\textnormal{#1}}}}
\newtheorem{thm}{Theorem}
\newtheorem{cor}{Corollary}
\newtheorem{ampn}{Assumption}
\newtheorem{lemma}{Lemma}
\begin{document}
\title{Robust Optimization for Tree-Structured Stochastic Network Design}
\author{Xiaojian Wu\Mark{1} ~~~~ Akshat Kumar\Mark{2} ~~~~ Daniel Sheldon\Mark{3,4} ~~~~ Shlomo Zilberstein\Mark{3} \\[4pt]
  \Mark{1} Department of Computer Science, Cornell University, USA \\
   \Mark{2} School of Information Systems, Singapore Management University, Singapore \\
  \Mark{3} College of Information and Computer Sciences, University of Massachusetts Amherst, USA\\
  \Mark{4} Department of Computer Science, Mount Holyoke College, USA \\
   {\tt xw458@cornell.edu} ~~~~~ {\tt akshatkumar@smu.edu.sg} ~~~~~ {\tt \{sheldon,shlomo\}@cs.umass.edu}
}

\maketitle
\begin{abstract}
Stochastic network design is a general framework for optimizing network connectivity. It has several applications in computational sustainability including spatial conservation planning, pre-disaster network preparation, and river network optimization. A common assumption in previous work has been made that network parameters (e.g., probability of species colonization) are precisely known, which is unrealistic in real-world settings. We therefore address the \textit{robust river network design problem} where the goal is to optimize river connectivity for fish movement by removing barriers. We assume that fish passability probabilities are known only imprecisely, but are within some interval bounds. We then develop a planning approach that computes the policies with either high \textit{robust ratio} or low \textit{regret}. Empirically, our approach scales well to large river networks.
We also provide insights into the solutions generated by our robust approach, which has significantly higher robust ratio than the baseline solution with mean parameter estimates.
\end{abstract}

\section{Introduction}
Many problems, such as influence maximization~\cite{Kempe03}, spatial and fish conservation planning~\cite{Sheldon10,Hanley05}, and predisaster preparation~\cite{schichl2015predisaster} can be formulated as a variant of the stochastic network design problem.
A stochastic network design problem (SNDP) is defined by a directed graph where each edge is either present or absent with some probability.
Management actions can be taken to change the probabilities of edge presence.
The goal is to determine which actions to take, subject to a budget, to optimize some outcome of the stochastic network over a time period.
Several approaches to solve SNDPs have been 
shown to scale up to large networks~\cite{chen2010scalable,Kumar12,wu2014stochastic,wu2016optimizing}.

An important assumption made in SNDPs is that the network parameters (e.g., probabilities of edge presence) are estimated accurately, which is not feasible in real world ecological domains due to noisy observations, model drift, climate change, and the diversity of species. To handle parameter uncertainty, researchers have formulated \textit{robust} network design problems that include uncertain network probabilities~\cite{he2014stability,chen2016scalable}.
Recently,~\citeauthor{kumar2016robust}~(2016) also studied a robust conservation planning problem where the movement probabilities of species and sizes of habitats are not accurately specified. The robust network design problem we address differs from previous work, which does not allow  management actions to modify interval parameters (e.g., edge probabilities). They only modify network structure, for example, by adding sources or nodes. In contrast, we allow management actions that can modify both interval bounds and network structure. As a result of the richer action space, 
it is unclear whether the sample average approximation (SAA) approach used in previous settings~\cite{kumar2016robust} is applicable to our problem.
To address these challenges, we develop a dynamic programming and mixed-integer programming based approach that can optimize connectivity without using SAA.

We study robust SNDPs for tree-structured river networks.
The motivating application is the barrier removal problem~\cite{neeson2015enhancing}, where the goal is to decide which instream barriers to remove or repair to help fish move upstream and get access to their historical habitats.
In this domain, the passage probability of a barrier can only be inaccurately estimated, and the new passage probability of a repaired barrier is even harder to estimate.
Hence, we model the uncertainty in passage probability using well known interval bounds~\cite{boutilier2003constraint}. We then develop a scalable algorithm to find the \emph{robust policy} for barrier removal.

The robustness of a policy can be quantified by two correlated metrics: \emph{robust ratio}~\cite{he2014stability,chen2016scalable} and \emph{regret}~\cite{boutilier2003constraint,kumar2016robust}.
Intuitively, assume that \emph{given} a policy, nature chooses an \textit{adversarial policy} that selects parameters within their interval bounds so as to either minimize the ratio between the values of the given policy and the adversarial policy (called \textit{robust ratio}) or maximize the value difference between them (called \textit{regret}).
We develop a scalable algorithm to find a robust policy that maximizes the robust ratio by solving a bilevel optimization problem.
We also show that, with minor modifications, our approach can be used to minimize regret.


The algorithm is based on a constraint generation procedure~\cite{boutilier2003constraint} that interleaves between two optimization steps.
The \textit{decision optimization} step finds a decision policy that maximizes the robust ratio when nature can choose policies and probabilities from a given \textit{limited} number of choices. In the second \textit{ratio minimization} step, the best adversarial policy and probabilities are found for the selected decision policy and are added to the set of choices for nature.
We provide a mixed integer linear programming formulation for the \textit{decision optimization} problem.
The \textit{ratio minimization} problem is much harder; we develop an algorithm called rounded dynamic programming (RDP) by combining a dynamic programming algorithm and a rounding method and show that it is a fully polynomial time approximation schema (FPTAS).
In experiments, we show that RDP performs nearly optimally as it selects the adversarial policy and probabilities.
Our algorithm can find policies that are more robust than policies found by baseline methods with respect to both robustness metrics.
We also provide insights on the robustness metrics by visualizing the solutions.

\section{River Network Design}
The problem is defined on a directed rooted tree $\T\=(V, E)$ with a unique \emph{root} denoted by $s$.
Edges spread out from the root. A node $v$ represents a contiguous region of the river network. It denotes a connected set of stream segments among which fish can move freely without passing any barriers. A node
$v$ is associated with a reward $r_v$ which is proportional to the total amount of habitat in that region (e.g., the total length of all segments).
An edge $e$ encodes a river barrier.
Fig.\,\ref{fig:river2tree} shows how to encode a river network as a directed rooted tree.
\begin{figure}[t]
\centering
\subfloat[River segments]{\includegraphics[height=32mm]{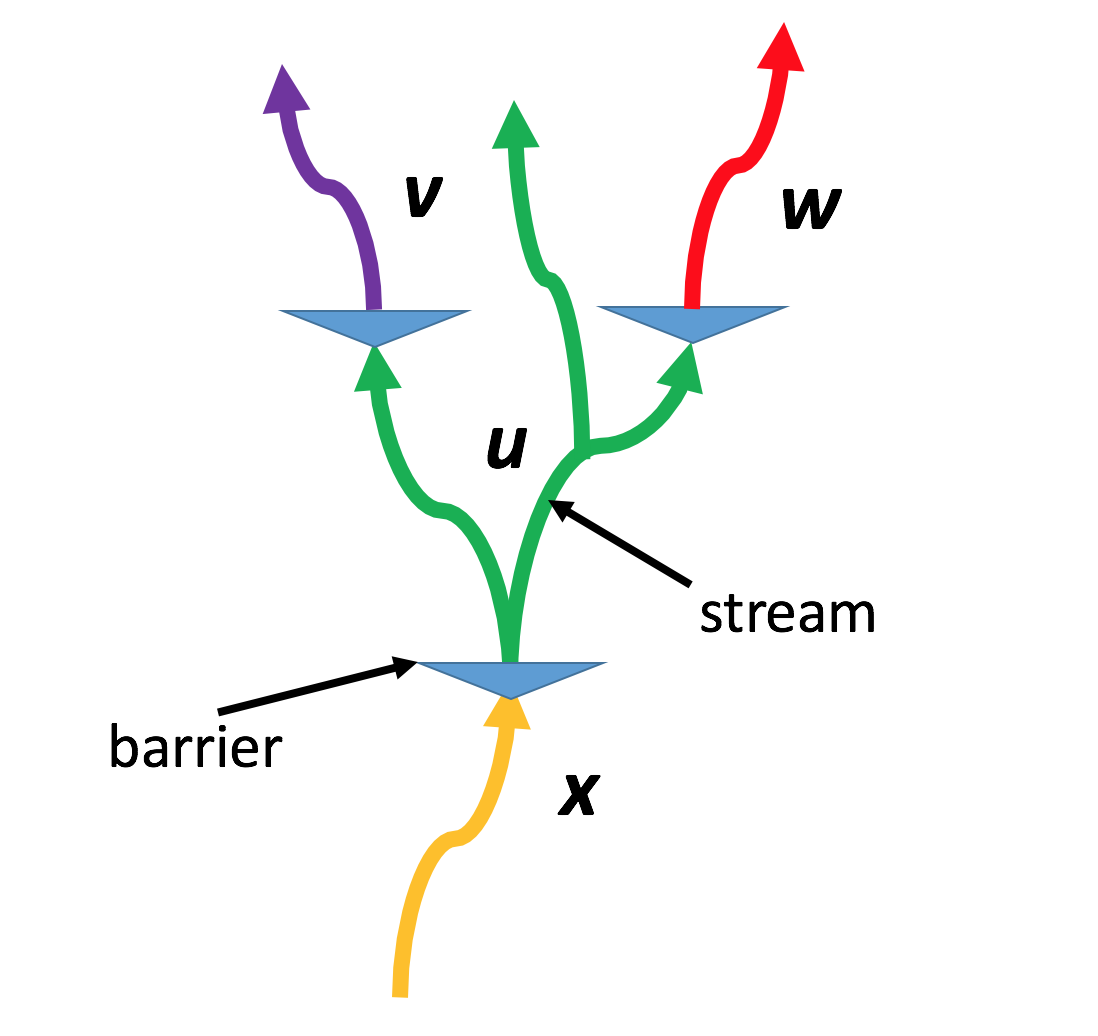}}
\subfloat[Directed rooted tree]{\includegraphics[height=32mm]{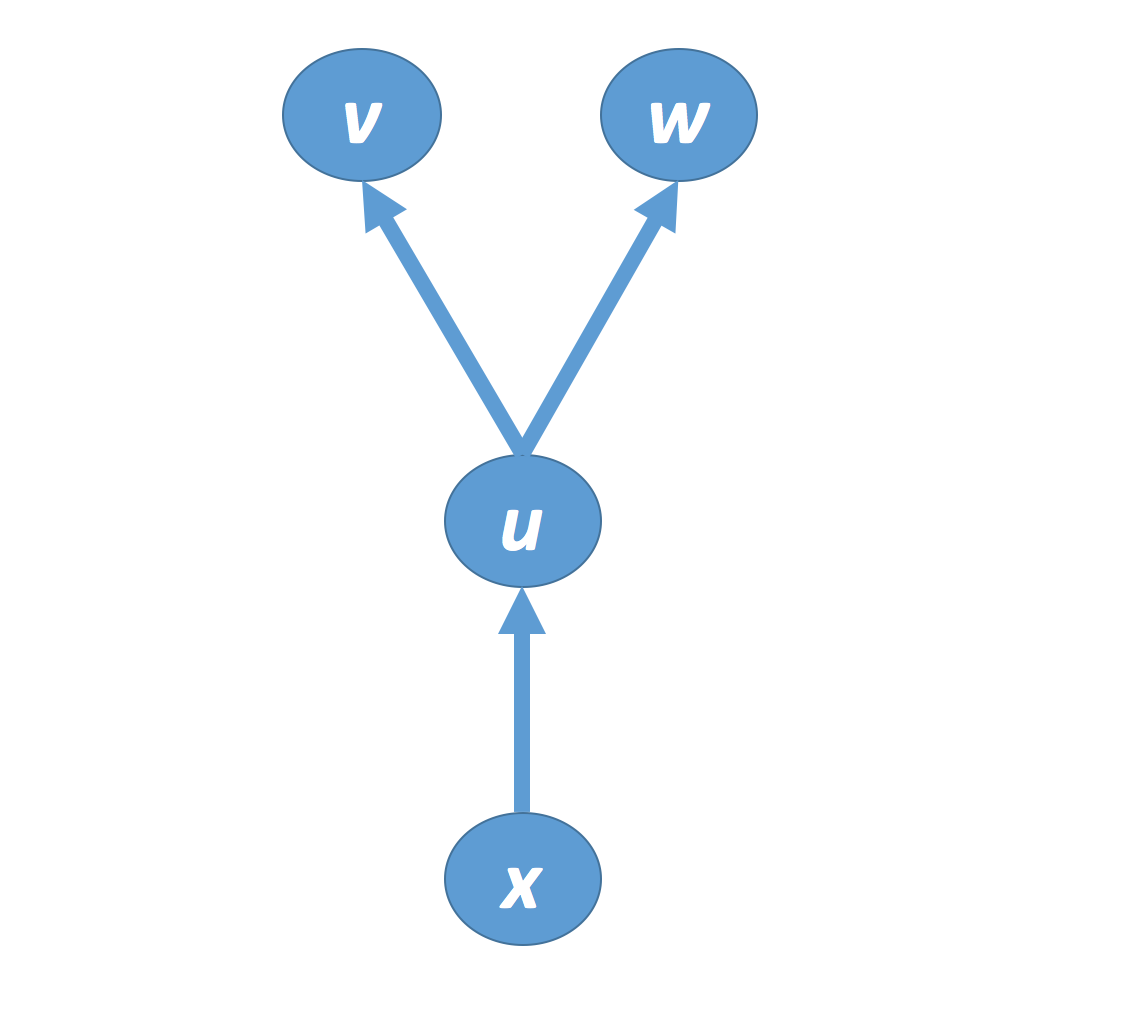}}
\caption{Encoding a river network as a directed rooted tree. Each color represents a contiguous region.}
\label{fig:river2tree}
\end{figure}
Each barrier is associated with a \emph{passage probability}---the probability that a fish can pass the barrier. Before any repair action is taken, the probability is called the \emph{initial passage probability} denoted by $p_e$.
A finite set of \emph{candidate actions} denoted by $A_e \= \{0, 1,...,m\}$ are available at $e$; an action $i$ has cost $c_e(i)$, and, if taken, can raise passage probability to $p_{e | i}$.
The action $0$ is the \emph{null action} with $p_{e|0} \= p_e$ and zero cost. 
A policy $\pi$ indicates which action is taken at each edge.
The passage probability for a given policy is denoted by $p_{e | \pi}$.
The accessibility of a node $v$ denoted by $p_{s \leadsto v | \pi}$ is the probability that a fish passed all barriers on the path from $s$ to $v$ or $p_{s \leadsto v | \pi} \= \prod_{e: \text{ on path from $s$ to $t$}} p_{e | \pi}$.
A reward $r_v$ can be collected only if a fish can reach $v$.
The \emph{value} of policy $\pi$, denoted by $z(\pi)$, is the total reward of nodes weighted by their accessibilities: $z(\pi) = \sum_{v\in V} p_{s \leadsto v | \pi} r_v$.
We also call $z(\pi)$ the  \emph{objective value} to differentiate between other values assigned to $\pi$.
The barrier removal problem~\cite{wu2014rounded} is to find a policy maximizing $z(\pi)$ subject to a budget constraint:
\begin{align}
\label{eq:maxex}\arg\max_{\pi} z(\pi) ~~~~~s.t.~~~~~~ c(\pi) \leq \Bu
\end{align}
where $c(\pi)$ is the total cost of action taken for each edge in the network. Let $\mc{X} = \{\pi: c(\pi) \leq \Bu\}$ denote the set of feasible policies. 

\begin{figure}[t]
\centering
\includegraphics[height=40mm]{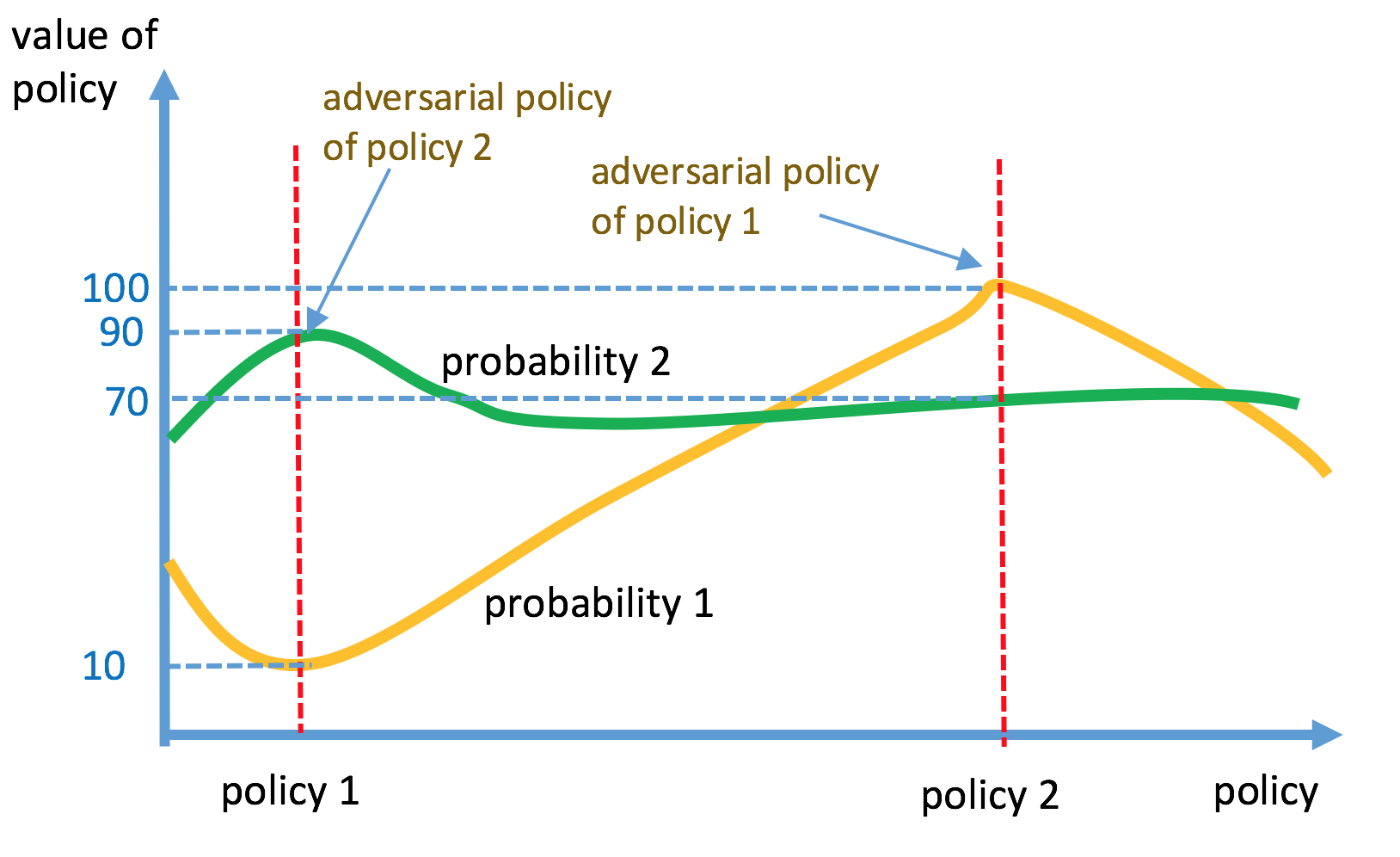}
\caption{Illustration of robust ratio with X-axis showing different policies. For {\small\sf policy\,1}, the adversary chooses {\small\sf policy\,2} and {\small\sf probability 1} (yellow curve) to minimize the robust ratio, which is $0.1$.
Similarly, the robust ratio of {\small\sf policy\,2} is $\frac{7}{9}$, hence it is more robust than {\small\sf policy\,1}.}
\label{fig:illustration}
\end{figure}

\subsubsection{Robust River Network Design} 
The barrier removal problem is defined upon the assumption that all the passage probabilities are known. However, this is an unrealistic assumption. Often, in real world settings, it is not possible to accurately estimate such probabilities. 
Therefore, in our model only interval bounds are specified for different probabilities~\cite{boutilier2003constraint}.  
Specifically, the passage probability for an edge $e$ and action $i \!\in\! A_e$ can take any value within a given interval. That is, $p_{e|i} \!\in\! \mc{P}_{e|i} \= [\underline{p}_{e|i}, \overline{p}_{e|i}]$.
Let $\bs{p}$ denote a vector of all probabilities $\bs{p} \= (p_{e|i})_{e\in E, i\in A_e}$. Let the space of all the allowed probabilities $\bs{p}$ be denoted as $\mc{P} \= \times_{e\in E, i\in A_e} \mc{P}_{e|i}$.
Our goal is to find a policy $\pi^{\mrr}$ that \emph{maximizes the robust ratio} as defined by 
\citeauthor{kouvelis2013robust} (2013) and \citeauthor{chen2016scalable} (2016):
\begin{equation}
\label{eq:mmratio}
\pi^{\mrr} \in \argmax_{\pi \in \feas} \min_{\pi' \in \feas, \bs{p} \in \mc{P}} \frac{z(\pi; \bs{p})}{z(\pi'; \bs{p})}.
\end{equation}
In the outer maximization, the decision maker seeks a \emph{decision policy} $\pi$ that is robust relative to adversarial choices made by nature. In the inner minimization, nature adversarially chooses a policy $\pi'$ and feasible parameters $\bs{p}$ (a \emph{policy-parameter pair}) to minimize the ratio between the value of the decision policy $\pi$ and the adversarial policy $\pi'$ on this set of parameters. The optimal value of the adversary is called the \emph{robust ratio} of policy $\pi$ with respect to parameter space $\mc{P}$. A policy (such as $\pi^{\mrr}$) that maximizes the robust ratio is called \emph{MRR-optimal}, and the robust ratio of such a policy is called the \emph{MRR-value}. Suppose $\pi^{\mrr}$ is MRR-optimal with MRR-value $\alpha$: then $\pi^{\mrr}$ achieves at least $\alpha$ fraction of the optimal reward for any parameter setting $\bs{p} \in \mc{P}$.
Fig.\,\ref{fig:illustration}  illustrates the concept.

\section{Our Method}
We develop an iterative method (Algorithm\,\ref{algorithm}) to solve Problem\,(\ref{eq:mmratio}) using constraint generation~\cite{boutilier2003constraint}.
\begin{algorithm}[t] \small
\caption{Robust Policy Optimization}
\label{algorithm}
\begin{algorithmic}[1]
\State Initialize $C = \{(\pi_0^\prime, \bs{p}_0)\}$ and  $T = 1$. 
\State \B{Decision Optimization:} obtain $\pi_T$ by solving:
\begin{align}
\label{eq:step2} U = \max_{\pi} \min_{(\pi^\prime, \bs{p}) \in C} z(\pi; \bs{p})  / z(\pi^\prime; \bs{p})
\end{align}
\State \B{Adversary Optimization:} obtain the adversarial policy-parameter pair $(\pi_{T}^\prime, \bs{p}_{T})$ with respect to $\pi_T$ by solving:
\begin{equation}
\label{eq:step3}
L= \min_{(\pi^\prime, \bs{p}) \in C} z(\pi_T; \bs{p}) / z(\pi^\prime; \bs{p}).
\end{equation}
\State if $U - L \leq threshold$, return $\pi_T$. Otherwise set $C = C\cup \{(\pi_{T}^\prime, \bs{p}_{T})\}$, increment $T$, and go to step 2.
\end{algorithmic}
\end{algorithm}
The high-level idea is to interleave two optimization problems. First, in the \emph{decision optimization problem}, the decision maker finds the best decision policy $\pi_T$ relative to a \emph{limited} adversary, who can only pick policy-parameter pairs from the finite set $C$. Then, the adversary selects a new policy-parameter pair to minimize the robust ratio with respect to the current decision policy $\pi_T$. The decision player's value $U$ is an upper bound on the MRR-value, because the adversary is limited to a finite subset of policy-parameter pairs. The adversary's optimal value $L$ is a lower bound on the MRR-value. When $U = L$, we have an MRR-optimal decision policy. By allowing a small gap between the two bounds, we can find a nearly MRR-optimal policy. The set $C$ is initialized with an arbitrary policy and probabilities.

\subsection{The Decision Optimization Problem}
The goal of Problem~\eqref{eq:step2} is to find a decision policy that maximizes the robust ratio with respect to the limited adversary. Fig.\,\ref{fig:MIP} presents a mixed-integer linear program (MILP) to solve this problem building on techniques from~\cite{neeson2015enhancing}. The variable $M$  encodes the MRR-value.
The inner minimization is replaced by inequality constraints~(\ref{MIP:CM}) on $M$. 
The continuous variable $z_{\bs{p}}$ encodes the objective value of the decision policy for probability setting $\bs{p}$ by~(\ref{MIP:Cer}). 
$z(\pi^\prime; \bs{p})$ is a constant for each policy-parameter pair $(\pi'; \bs{p}) \in C$.
$x^i_e$ is a binary decision variable indicating whether action $i \in A_e$ is applied to $e$ ($=1$) or not ($=0$). 
Constraint~(\ref{MIP:Cdecion}) enforces that one and only one action is taken at each edge, and~(\ref{MIP:budget}) is the budget constraint.

The constraint set $\Omega(\bs{p},x)$ defined in (\ref{MIP:er})--(\ref{MIP:18}) forces $z_{\bs{p}}$ to be the objective value of $\pi$ under probability setting $\bs{p}$.
 The variable $\alpha^{\bs{p}}_v$ encodes the accessibility of node $v$. 
The root node has accessibility $1$ by (\ref{MIP:root}). 
$\Pi(v)$ denotes the \emph{parent} of node $v$. 
Recall that each node has at most one parent.
The variable $\lambda^{\bs{p}}_{v,i}$ encodes the increment in the accessibility of node $v$ if an action $i \in A_{\Pi(v), v}$ is applied to edge $(\Pi(v), v)$.
In~(\ref{MIP:access}), the accessibility of $v$ equals to the cumulative passability when no action is taken on edge $(\Pi(v), v)$ (the term $\alpha^{\bs{p}}_{\Pi(v)} p_{\Pi(v)v}$) plus the total increment (the term $\sum_{i\in A_{\Pi(v)v}}\lambda^{\bs{p}}_{v, i} $).
Actually, at most one action can be taken, so only one $\lambda^{\bs{p}}_{v, i}$ will be nonzero in the summation.
The increment $\lambda^{\bs{p}}_{v,i}$ is nonzero only if $x^i_{\Pi(v)v}$ is $1$ by~(\ref{MIP:lambda}), and can be at most $(p_{\Pi(v)v | i} - p_{\Pi(v)v}) ~ \alpha^{\bs{p}}_{\Pi(v)}$ by~(\ref{MIP:lambda2}), which is exactly the increment when action $i$ is taken.

\begin{figure}[t]
\begin{framed}
\vspace{-14pt}
{\scriptsize
\begin{align}
& \max M \\
\label{MIP:CM}& M \leq \frac{z_{\bs{p}} }{z(\pi^\prime; \bs{p})} ~~~~~~~~~~~~~~~~~~~~~~~~~~~~~~~~~~~~\forall (\pi^\prime; \bs{p}) \in C \\
\label{MIP:Cer}& z_{\bs{p}} \in \Omega(\bs{p}, x) ~~~~~~~~~~~~~~~~~~~~~~~~~~~~~~~~~~~~~~~\forall (\pi^\prime; \bs{p}) \in C \\
\label{MIP:Cdecion}& \sum_{i \in A_e} x^i_e = 1 ~~~~~~~~~~~~~~~~~~~~~~~~~~~~~~~~~~~~~~~~\forall e \in E \\
\label{MIP:budget}& \sum_{e\in E} \sum_{i \in A_e} c_i x_i \leq \Bu  \\
\label{MIP:decision} & x^i_e \in \{0, 1\} ~~~~\forall e \in E, \forall i \in A_e \\
& \text{\B{Constraint set}}~\Omega(\bs{p}, x) \\
\label{MIP:er}& z_{\bs{p}} = \sum_{v\in V} \alpha^{\bs{p}}_v r_v   \\
\label{MIP:root}& \alpha^{\bs{p}}_s = 1   \\
\label{MIP:access}& \alpha^{\bs{p}}_v = \alpha^{\bs{p}}_{\Pi(v)} p_{\Pi(v)v} + \sum_{i\in A_{\Pi(v)v}}\lambda^{\bs{p}}_{v, i} ~~~~~~~~~\forall v \in V / \{s\} \\
\label{MIP:lambda}& \lambda^{\bs{p}}_{v, i} \leq x^i_{\Pi(v)v} ~~~~~~~~~~~~~~~~~~~~~~~~~~~~~~~~~~~~~~\forall v\in V / \{s\}, \forall i \in A_{\Pi(v)v} \\
\label{MIP:lambda2}& \lambda^{\bs{p}}_{v, i} \leq (p_{\Pi(v)v | i} - p_{\Pi(v)v}) ~ \alpha^{\bs{p}}_{\Pi(v)}  ~ \forall v\in V / \{s\}, \forall i\in A_{\Pi(v)v}\\
\label{MIP:17}& \alpha^{\bs{p}}_v \in [0, 1] ~~~~~~~~~~~~~~~~~~~~~~~~~~~~~~~~~~~~~~~~~~~~~\forall (\pi^\prime, \bs{p}) \in C, \forall v\in V \\
\label{MIP:18}& \lambda^{\bs{p}}_{e,i} \in [0,1] ~~~~~~~~~~~~~~~~~~~~~~~~~~~~~~\forall (\pi^\prime, \bs{p}) \in C, \forall e\in E, \forall i \in A_e
\end{align}
}
\vspace{-14pt}
\end{framed}
\caption{Mixed integer linear program to maximize the robust ratio for a given set $C$}
\vspace{-2pt}
\label{fig:MIP}
\end{figure}

\subsection{The Adversary Optimization Problem}
In the adversary optimization step, we wish to solve Problem~\eqref{eq:step3} to find a policy-parameter pair $(\pi^{\prime*}, \bs{p}^*)$ to minimize the robust ratio with respect to the current decision policy.

Here is our main result.
\begin{thm}
There is an FPTAS for problem (\ref{eq:step3}). It finds a policy-parameter pair with robust ratio at most $(1 + \epsilon) OPT$ in time $O(\frac{n^4}{\mu^2})$ where $\mu = \frac{\epsilon}{2 + \epsilon}$, $n$ is the number of nodes in the tree, and $OPT$ is the optimal value of (\ref{eq:step3}).
\label{thm:main}
\end{thm}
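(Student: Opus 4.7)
\medskip
\noindent
\textbf{Proof proposal.}
The plan is to build a rounded dynamic program (RDP) over the tree $\mathcal{T}$. First I would reduce the continuous adversary problem to a combinatorial one. Fix any adversarial policy $\pi'$ and look at the ratio $z(\pi_T;\bs{p})/z(\pi';\bs{p})$ as a function of a single coordinate $p_{e|i}$ holding all others fixed. Both $z(\pi_T;\bs{p})$ and $z(\pi';\bs{p})$ are affine in $p_{e|i}$, so the ratio is a M\"obius-type function and hence monotone on the interval $[\underline{p}_{e|i},\overline{p}_{e|i}]$. By a coordinate-sweep argument an optimum $\bs{p}^*$ exists with every $p_{e|i}$ at one of the endpoints, so at each edge the adversary faces only a finite menu of (action, endpoint) choices.

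Next I would design a bottom-up tree DP. For each node $v$, maintain a table $\Gamma_v$ of achievable triples $(Z^T_v, Z'_v, b)$ where $Z^T_v = \sum_{u\in \text{subtree}(v)} r_u \prod_{e'\in \text{path}(v,u)} p_{e'|\pi_T}$ is the contribution of the subtree to the numerator (assuming accessibility one at $v$), $Z'_v$ is the analogous contribution to the denominator under the choices made for $\pi'$ and $\bs{p}$ restricted to the subtree, and $b$ is the adversary's cost within the subtree. A leaf contributes $(r_v, r_v, 0)$. At an internal node $v$ with children $u_1,\dots,u_k$, for every joint selection of a triple from each $\Gamma_{u_j}$, an action $\pi'(v u_j)\in A_{v u_j}$, and probability endpoints for $p_{v u_j|\pi_T}$ and $p_{v u_j|\pi'}$ (coupled when the two actions coincide), build
\[
Z^T_v = r_v + \sum_j p_{v u_j|\pi_T} Z^T_{u_j}, \quad Z'_v = r_v + \sum_j p_{v u_j|\pi'} Z'_{u_j},
\]
with $b = \sum_j \bigl(b_{u_j} + c_{v u_j}(\pi'(v u_j))\bigr)$. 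The adversary value is then $\min\{Z^T_s/Z'_s : (Z^T_s, Z'_s, b)\in \Gamma_s,\; b\leq \Bu\}$.

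To obtain polynomial size, at every combination step round $Z^T_v$ up to the nearest multiple of $\mu R/n$ and $Z'_v$ down to the nearest multiple of the same grid, where $R=\sum_v r_v$ upper-bounds either quantity. Each $\Gamma_v$ then has $O(n/\mu)$ values for $Z^T_v$, $O(n/\mu)$ for $Z'_v$, and $O(n)$ effective values for $b$, and combining tables over the $n$ nodes yields the claimed $O(n^4/\mu^2)$ running time.

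The hard part will be the approximation analysis, since two-sided rounding is performed at every internal node and the naive error could compound with tree depth. The plan is to argue inductively that if $(\pi'^*, \bs{p}^*)$ is an optimal adversary, the RDP contains a ``shadow'' triple at every node whose true numerator has been inflated by at most a factor $(1+\mu)$ and whose true denominator has been deflated by at most a factor $(1-\mu)$. The multiplication by an edge probability $\leq 1$ does not amplify multiplicative error, and summing across children preserves the one-sided bounds, so the factors do not accumulate across levels. At the root this yields a DP entry with ratio at most $\frac{(1+\mu)z(\pi_T;\bs{p}^*)}{(1-\mu)z(\pi'^*;\bs{p}^*)} = \frac{1+\mu}{1-\mu}\,\mathrm{OPT}$, which equals $(1+\epsilon)\mathrm{OPT}$ exactly when $\mu=\epsilon/(2+\epsilon)$. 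Combined with the running-time bound this gives the FPTAS.
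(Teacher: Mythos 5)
Your proposal follows essentially the same route as the paper: reduce the adversary's probability choices to interval endpoints via monotonicity of a ratio of affine functions in each coordinate, run a bottom-up tree DP over pairs of (decision-value, adversary-value) per subtree, round the two values in opposite directions, and convert to the factor $\frac{1+\mu}{1-\mu}=1+\epsilon$. Two steps, however, do not work as written. First, the runtime: keeping the adversary's cost $b$ as a third table coordinate with ``$O(n)$ effective values'' is unjustified --- the action costs are arbitrary, so the number of distinct achievable subtree costs is not $O(n)$, and even a linear number of cost values would push the child-combination step well past $O(n^4/\mu^2)$. The fix is to store, for each rounded value pair, only the \emph{minimum} adversary cost achieving it, and check the budget only at the root.

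Second, the approximation analysis, which you correctly flag as the hard part, is sketched with an invariant that does not close. ``Numerator inflated by at most a factor $(1+\mu)$'' survives the summation over children, but the rounding performed at node $v$ itself then adds a fresh additive error of one grid width, so the purely multiplicative invariant is broken at every level. The invariant that does close is additive: the accumulated error over the subtree of $u$ is at most $\sum_{t\in\T_u} p_{u\leadsto t\mid\pi} K_t$, where $K_t$ is the grid width at $t$, since each node's rounding error is discounted by the path probability down to it. Choosing the node-dependent grid $K_t=\mu r_t$ makes this sum exactly $\mu z_u$, yielding the multiplicative factor in a single step at the end. Your uniform grid $\mu R/n$ gives only $\mu(R/n)\sum_t p_{u\leadsto t\mid\pi}$, which is $O(\mu)\,z_u$ only when all rewards are within constant factors of one another (an assumption the paper does make for its runtime bound, but which you must invoke, at the cost of a constant in $\mu$). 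Finally, you establish only that a good ``shadow'' entry exists at the root; add the one-line observation that, because the numerator is rounded up and the denominator down, the true ratio of the entry the algorithm actually returns is at most its rounded ratio, hence at most the shadow entry's rounded ratio, hence at most $(1+\epsilon)\,OPT$.
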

\noindent
The FPTAS only approximately minimizes the objective, so the value $\hat{L}$ it achieves not a lower bound in  in Algorithm~\ref{algorithm}. However, the approximation guarantee implies that $L = \frac{\hat{L}}{1 + \epsilon}$ is a lower bound.

In the rest of this section, we prove Theorem~\ref{thm:main} (proofs of some auxiliary results are left in appendix). We first propose a dynamic programming (DP) algorithm for problem~(\ref{eq:step3}), but this takes exponential time. We then develop a rounding strategy to reduce the running time to polynomial time and prove that this is an FPTAS. 
This basic idea is originally used for the barrier removal problem~(\ref{eq:maxex})~\cite{wu2014rounded}.
The adversary optimization problem here is more complex as the adversary tries to simultaneously minimize the value of decision policies and maximizes the value of adversarial policies.
To guarantee the approximation rate, we round these two values distinctly.

To simplify the presentation, we assume without loss of generality the following:
\begin{ampn}
Each node $u\in \T$ has at most two children.
\end{ampn}
\noindent Any problem instance can be converted to satisfy this assumption~\cite{wu2014rounded}.
Our first lemma restricts the space of parameters to be considered.
\begin{lemma}
\label{lemma:probsetting} 
There exists an optimal policy-parameter pair $(\pi^{\prime*}, \bs{p}^*)$ for Problem~(\ref{eq:step3}) with the following property.
Suppose $\pi^{\prime*}$ takes action $i$ and the decision policy $\pi$ takes action $j$ on edge $e$.
If $j\neq i$, then $p^*_{e|i} = \ub{p}_{e|i}$ and $p^*_{e|j} = \lb{p}_{e|j}$.
Otherwise, $p^*_{e|i}$ is either $\lb{p}_{e|i}$ or $\ub{p}_{e|i}$.
\end{lemma}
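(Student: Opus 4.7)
The plan is an exchange argument: starting from any optimal pair $(\pi^{\prime*},\bs{p}^*)$, I modify $\bs{p}^*$ one edge at a time, pushing each relevant coordinate to the claimed interval endpoint in a way that weakly decreases the ratio $z(\pi;\bs{p})/z(\pi^{\prime*};\bs{p})$. Because the starting pair is already optimal (minimizes the ratio), the ratio cannot strictly decrease, so each local modification preserves optimality; after sweeping over all edges, the resulting pair is optimal and, by construction, has the structure claimed in the lemma.

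The workhorse is a single-edge decomposition. Fix an edge $e$, let $j = \pi(e)$ and $i = \pi^{\prime*}(e)$, and hold all coordinates of $\bs{p}$ other than $p_{e|i}$ and $p_{e|j}$ fixed. Splitting $V$ into $V_e$, the subtree rooted at the child of $e$, and its complement, the contribution of any node outside $V_e$ does not involve $p_{e|\cdot}$, while for every $v\in V_e$ the factor $p_{s\leadsto v|\pi}$ contains $p_{e|j}$ as a single multiplicative factor (and similarly contains $p_{e|i}$ in $z(\pi^{\prime*};\bs{p})$). This gives the affine form
\[
z(\pi;\bs{p}) = R + \alpha\beta\, p_{e|j}, \qquad z(\pi^{\prime*};\bs{p}) = R' + \alpha'\beta'\, p_{e|i},
\]
where $\alpha$ is the accessibility of the source of $e$ under $\pi$, $\beta = \sum_{v\in V_e} r_v\, p_{\text{child}(e)\leadsto v|\pi}$, $R$ is the reward from nodes outside $V_e$ under $\pi$, and the primed quantities are defined analogously under $\pi^{\prime*}$. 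All six coefficients are nonnegative and independent of $p_{e|i}$ and $p_{e|j}$.

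When $i \neq j$, the two variables are independent: $p_{e|j}$ enters only the numerator and $p_{e|i}$ only the denominator, so setting $p^*_{e|j} = \lb{p}_{e|j}$ weakly minimizes the numerator while setting $p^*_{e|i} = \ub{p}_{e|i}$ weakly maximizes the denominator, hence weakly decreases the ratio. When $i = j$, the ratio restricts to the linear-fractional function $f(x) = (R + \alpha\beta x)/(R' + \alpha'\beta' x)$ in $x = p_{e|i}$; its derivative equals $(\alpha\beta R' - \alpha'\beta' R)/(R' + \alpha'\beta' x)^2$, whose sign is independent of $x$, so $f$ is monotonic on $[\lb{p}_{e|i}, \ub{p}_{e|i}]$ and attains its minimum at an endpoint. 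In both cases the modification matches the lemma's prescription.

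The main subtlety I expect is that the six coefficients at edge $e$ depend on probabilities at other edges and therefore shift as we update those edges first. The iteration still goes through because the argument at every step relies only on nonnegativity of the coefficients and affinity of $z(\pi;\bs{p})$ and $z(\pi^{\prime*};\bs{p})$ in the single coordinate being updated, and both properties survive any prior updates elsewhere. Sweeping over all edges in any order therefore yields an optimal pair with the stated structure; a minor technicality, avoiding $z(\pi^{\prime*};\bs{p})=0$, is handled by restricting attention to pairs with positive denominator, which is the relevant regime for the adversary.
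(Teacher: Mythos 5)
Your proof is correct and follows essentially the same route as the paper's: both isolate the numerator and denominator as affine functions of the edge's probabilities with nonnegative coefficients, push the two independent variables to opposite endpoints when $i\neq j$, and use monotonicity of the resulting linear-fractional function when $i=j$. Your version is slightly more careful in two respects the paper glosses over --- you make the edge-by-edge exchange argument and its validity under earlier updates explicit, and your derivative-sign argument for the $i=j$ case avoids the paper's division by $c$, which silently assumes $c>0$.
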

\noindent Lemma~\ref{lemma:probsetting} guarantees that the optimal adversary probability is either the upper or lower bound of the interval.

\subsubsection{Policy-Parameter Actions and Optimization}
First, we redefine problem~(\ref{eq:step3}) in the following way so that it is amenable to dynamic programming.

Let $\pi$ be fixed.
The new optimization problem is the same as the river network design problem~(\ref{eq:maxex}) except that its objective is the robust ratio $\frac{z(\pi; \bs{p}) }{z(\pi^\prime; \bs{p})}$ and its actions encode both the actions and parameters of the adversary.

We define a finite set of \emph{policy-parameter actions} $A^p_e$ for each edge, which encode choices made by the adversary for edge $e$, including both the action taken and the probability setting for each available action.
A policy-parameter action is a vector $(\bs{i}^a_e, \bs{p}_{e | 0}, ..., \bs{p}_{e | |A_e|})$ taking value in $A^p_e = A_e \times \prod_{j\in A_e} \{\lb{p}_{e|j}, \ub{p}_{e|j}\}$.
$\bs{i}^a_e$ specifies the action that the adversary takes at $e$.
$\bs{p}_{e | j}$ specifies the passage probability on $e$ for action $j$.
It is easy to see from Lemma~\ref{lemma:probsetting} that a given policy-parameter action need only consider $\lb{p}_{e|j}$ and $\ub{p}_{e|j}$ as possible values for $\bs{p}_{e|j}$ without sacrificing optimality.
In addition, Lemma~\ref{lemma:probsetting} allows us to eliminate certain policy-parameter actions from consideration.
For example, if $A_e = \{0, 1\}$ and the decision policy $\pi$ takes action $1$, $A^s_e$ only needs to include $3$ policy-parameter actions
{\small
\begin{align*}
(0, \ub{p}_{e | 0}, \lb{p}_{e | 1}), ~~  (1, \lb{p}_{e | 0}, \lb{p}_{e | 1}), ~~ (1, \lb{p}_{e | 0}, \ub{p}_{e | 1})
\end{align*}
}
More generally, we have
\begin{cor}
For a fixed $\pi$, only $|A_e| + 1$ actions in $A^s_e$ are needed to compute $(\pi^{\prime*}, \bs{p}^*)$.
\end{cor}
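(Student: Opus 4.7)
The plan is to treat the corollary as a direct bookkeeping consequence of Lemma~\ref{lemma:probsetting}. First I would observe that, for a fixed decision policy $\pi$ taking action $j$ on edge $e$, the per-edge contribution to the ratio $z(\pi;\bs{p})/z(\pi';\bs{p})$ involves only the two scalars $\bs{p}_{e|j}$ (appearing through $\pi$) and $\bs{p}_{e|\bs{i}^a_e}$ (appearing through $\pi'$). Consequently, the remaining coordinates $\bs{p}_{e|k}$ for $k\notin\{j,\bs{i}^a_e\}$ can be frozen at any canonical value (say, $\lb{p}_{e|k}$) without changing the objective, so any two policy-parameter actions that agree on the triple $(\bs{i}^a_e,\bs{p}_{e|j},\bs{p}_{e|\bs{i}^a_e})$ collapse to a single representative for the purposes of computing $(\pi^{\prime*},\bs{p}^*)$.

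Next I would split on the adversary's action $\bs{i}^a_e=i$ and count representatives in each branch. In the case $i\neq j$, Lemma~\ref{lemma:probsetting} pins $\bs{p}_{e|i}=\ub{p}_{e|i}$ and $\bs{p}_{e|j}=\lb{p}_{e|j}$, leaving no degrees of freedom; this contributes exactly one policy-parameter action per choice of $i$, for a total of $|A_e|-1$ actions. In the case $i=j$, the lemma only constrains $\bs{p}_{e|j}\in\{\lb{p}_{e|j},\ub{p}_{e|j}\}$, giving $2$ representatives. Summing yields $(|A_e|-1)+2=|A_e|+1$, which precisely matches the three actions listed for the worked $A_e=\{0,1\}$ example.

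The only mild obstacle is justifying that shrinking the full Cartesian product $A_e\times\prod_{j\in A_e}\{\lb{p}_{e|j},\ub{p}_{e|j}\}$ of size $|A_e|\cdot 2^{|A_e|}$ down to $|A_e|+1$ is lossless; this is exactly the ``irrelevant coordinates'' remark combined with the two-case enumeration above. The payoff is that the dynamic program underlying Theorem~\ref{thm:main} need only branch over $O(|A_e|)$ adversarial moves per edge rather than an exponential number, which is what keeps the per-edge work polynomial and allows the overall $O(n^4/\mu^2)$ bound stated in the theorem.
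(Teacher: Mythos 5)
Your proposal is correct and follows essentially the same route as the paper's own proof: invoke Lemma~1 to pin down the unique probability setting when the adversary's action differs from $\pi$'s (giving $|A_e|-1$ representatives), allow the two boundary choices when the actions coincide (giving $2$ more), and note that probabilities of actions taken by neither policy are irrelevant to the objective. The extra bookkeeping you supply (the explicit collapse of the $|A_e|\cdot 2^{|A_e|}$ product and the check against the $A_e=\{0,1\}$ example) is consistent with, and slightly more detailed than, the paper's argument.
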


In summary, the choice of a policy-parameter action for each edge to minimize the robust ratio gives the optimal policy-parameter pair $(\pi^{\prime*}, \bs{p}^*)$ for problem~(\ref{eq:ineq3}).

\subsubsection{Dynamic Programming}
We now present a dynamic programming algorithm to solve this new problem with policy-parameter actions.

In a rooted directed tree, each node $u$ corresponds to a subtree $\T_u$.
Define $\pi_u$ (or $\pi^\prime_u$) to be the subset of $\pi$ (or $\pi^\prime$) that only includes actions for edges within $\T_u$, and define $\bs{p}_u$ to be the subset of $\bs{p}$ including probabilities only in $\T_u$.
Define $z_u(\pi_u; \bs{p}_u)$ to be the objective value of policy $\pi_u$ on subtree $\T_u$ with probability vector $\bs{p}_u$ pretending that $u$ is the overall root, i.e., $z_u(\pi_u; \bs{p}_u) = \sum_{t\in \T_u} p_{u\leadsto t | \pi} r_t$.
Similarly, $z_u(\pi^\prime_u; \bs{p}_u)$ is the value of $\pi^\prime_u$ for $\T_u$.
The following recurrences calculate both values for a given $(\pi_u; \bs{p}_u)$
{\small \begin{align}
\label{eq:rec_a}& z_u(\pi^\prime_u; \bs{p}_u) \!=\! r_u \!+\! p_{uv | \pi^\prime_u} z_v(\pi^\prime_v; \bs{p}_v) + p_{uw | \pi^\prime_u} z_w(\pi^\prime_w; \bs{p}_w) \\
\label{eq:rec_d}& z_u(\pi_u; \bs{p}_u) \!=\! r_u \!+\! p_{uv | \pi_u} z_v(\pi_v; \bs{p}_v) + p_{uw | \pi_u} z_w(\pi_w; \bs{p}_w) 
\end{align}}
The DP table of subtree $\T_u$ is indexed by pairs $(z^a_u, z^d_u)$, where $z^a_u$ represents an objective value of an adversary policy and $z^d_u$ represents an objective value of the (fixed) decision policy on that subtree. The table includes only pairs that are achievable by some probability vector $\bs{p}_u$ and adversary policy $\pi'_u$ for subtree $\T_u$, that is, $z^d_u = z_u(\pi_u; \bs{p}_u)$ and $z^a_u = z_u(\pi'_u, \bs{p}_u)$. 
Let $\Phi(z^a_u, z^d_u) = \{(\pi^\prime_u, \bs{p}_u)~ | ~ z_u(\pi^\prime_u; \bs{p}_u) = z^a_u, z_u(\pi_u; \bs{p}_u) = z^d_u\}$ be the set of all policy-parameter pairs that map to a pair of objective values $(z^a_u, z^d_u)$. 
For the entry of the table indexed by $(z^a_u, z^d_u)$, we record only the \emph{minimum-cost} adversary policy, and the minimum cost (denoted by $mc$) it achieves:
\begin{align}
\mincost(z^a_u, z^d_u) = \min_{(\pi^\prime, \bs{p}) \in \Phi(z^a_u, z^d_u)} c(\pi^\prime_u)
\end{align}

The DP tables for all subtrees can be calculated recursively from leaf nodes toward the root $s$ in the following way.
First, the table at a leaf node contains a single tuple with cost $0$ because the subtree contains only the leaf node.
Consider a node $u$ with two children $v$ and $w$.
We can build the DP table at $u$ if we have the DP tables of $v$ and $w$ by computing all achievable objective-value pairs at $u$ and their minimum costs. 
From each pair $(z^a_v, z^d_v)$ at $v$ and each pair $(z^a_w, z^d_w)$ at $w$, policy-parameter pairs $(\pi^\prime_v,\bs{p}_v)$ and $(\pi^\prime_w, \bs{p}_w)$ can be extracted.
For each policy-parameter action $(i^a_{uv}, \bs{p}_{uv})$ on edge $(u,v)$ and each policy-parameter action $(i^a_{uw}, \bs{p}_{uw})$ on edge $(u,w)$, a new pair  $(\pi^\prime_u, \bs{p}_u)$ at $u$ can be built, with which we can compute a pair $(z^a_u, z^d_u)$ using recurrences~(\ref{eq:rec_a}) and~(\ref{eq:rec_d}).
The cost of this new pair is
\begin{align}
\label{eq:combine_c} c(i^a_{uv}) + c(i^a_{uw}) + \mincost(z^a_v, z^d_v) + \mincost(z^a_w, z^d_w)
\end{align}
The same pair may be generated multiple times, but only the minimum cost is recorded.

Once all DP tables are computed, the optimal solution can be extracted from the table at $s$ by finding a tuple
\begin{align*}
(z^{a*}_s, z^{d*}_s) \in \arg\min_{\mincost(z^a_s, z^d_s) \leq \Bu} ~~ \frac{z^d_s}{z^a_s}
\end{align*}
The pair $(\pi^{\prime*}, \bs{p}^*)$ associated with the tuple minimizes the objective.

Unfortunately, the table size grows exponentially with the height of the node in the tree. 
We next introduce a rounding strategy to make the algorithm scalable.

\subsubsection{Rounding}
We define rounded value functions $\hat{z}_u(\pi^\prime; \bs{p})$ and $\hat{z}_u(\pi; \bs{p})$ for subtree $u$ and introduce the following recurrences for \emph{rounded} value functions: \\
{\small \begin{align}
\label{eq:rec_ra}& \hat{z}_u(\pi^\prime_u; \bs{p}_u) \!=\! K_u \!\! \left\lfloor \!\frac{r_u \!+ p_{uv | \pi^\prime_u} \hat{z}_v(\pi^\prime_v; \bs{p}_v) + p_{uw | \pi^\prime_u} \hat{z}_w(\pi^\prime_w; \bs{p}_w)}{K_u} \!\right \rfloor\\[4pt]
\label{eq:rec_rd}& \hat{z}_u(\pi_u; \bs{p}_u) \!=\! K_u \!\!\left \lceil \frac{r_u \!+ p_{uv | \pi_u} \hat{z}_v(\pi_v; \bs{p}_v) + p_{uw | \pi_u} \hat{z}_w(\pi_w; \bs{p}_w)}{K_u} \!\right \rceil
\end{align}}
where $K_u$ is an user defined \emph{rounding parameter}.
Intuitively, values are rounded and grouped into discrete bins, which reduces the number of pairs in the DP table.
The following theorem states that for any given policy-parameter pair, the rounded objective values are not too far from the true values.
\begin{thm}
Let $\mu > 0$.
If we set $K_u = \mu r_u$, for any $(\pi^\prime_u, \bs{p}_u)$ and any $\pi_u$, we have
{\small 
\begin{align}
\label{eq:ineq1}&z_u(\pi^\prime_u;\bs{p}_u) \!-\! \hat{z}_u(\pi^\prime_u;\bs{p}_u) \!\leq \!\!\sum_{t\in \T_u} p_{u\leadsto t | \pi'_u} K_t \!=\! \mu z_u(\pi^\prime_u;\bs{p}_u)  \\
\label{eq:ineq2}& \hat{z}_u(\pi_u;\bs{p}_u) \!-\! z_u(\pi_u;\bs{p}_u) \!\leq \!\!\sum_{t\in \T_u} p_{u\leadsto t | \pi_u} K_t \!=\! \mu z_u(\pi_u;\bs{p}_u) \\
\label{eq:ineq3}& z_u(\pi^\prime_u;\bs{p}_u) \geq \hat{z}_u(\pi^\prime_u;\bs{p}_u) \\
\label{eq:ineq4}& \hat{z}_u(\pi_u;\bs{p}_u) \geq z_u (\pi_u;\bs{p}_u)
\end{align}
}
\end{thm}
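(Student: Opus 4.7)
The plan is to prove all four inequalities by induction on the height of the subtree rooted at $u$, processing from leaves upward. The base case is a leaf, where $z_u = \hat z_u = r_u$ (assuming $K_u = \mu r_u$ divides $r_u$ trivially, or using $\lfloor r_u/K_u\rfloor = 1/\mu$ if $\mu \le 1$; either way all four inequalities hold at the base with equality or a benign rounding gap that is absorbed later).

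Inequalities (\ref{eq:ineq3}) and (\ref{eq:ineq4}) are the easy ones and I would do them first, as they are also used inside the inductive step for (\ref{eq:ineq1}) and (\ref{eq:ineq2}). For (\ref{eq:ineq3}), apply the elementary bound $K\lfloor x/K\rfloor \le x$ to the floor recurrence (\ref{eq:rec_ra}), then substitute the inductive hypothesis $\hat z_v(\pi'_v;\bs p_v) \le z_v(\pi'_v;\bs p_v)$ and likewise for $w$; what remains on the right is exactly the true recurrence (\ref{eq:rec_a}). Symmetrically for (\ref{eq:ineq4}), use $K\lceil x/K\rceil \ge x$ on (\ref{eq:rec_rd}) together with $\hat z_v(\pi_v;\bs p_v) \ge z_v(\pi_v;\bs p_v)$.

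For (\ref{eq:ineq1}) I would use the complementary bound $K\lfloor x/K\rfloor \ge x - K$ applied to (\ref{eq:rec_ra}). Subtracting from (\ref{eq:rec_a}) and cancelling the $r_u$ term gives
\[
z_u(\pi'_u;\bs p_u)-\hat z_u(\pi'_u;\bs p_u) \le K_u + p_{uv|\pi'_u}\bigl(z_v(\pi'_v;\bs p_v)-\hat z_v(\pi'_v;\bs p_v)\bigr) + p_{uw|\pi'_u}\bigl(z_w(\pi'_w;\bs p_w)-\hat z_w(\pi'_w;\bs p_w)\bigr).
\]
Applying the inductive hypothesis to each child and using the path-product identity $p_{u\leadsto t|\pi'_u} = p_{uv|\pi'_u}\cdot p_{v\leadsto t|\pi'_v}$ for $t\in\T_v$ (and likewise for $w$), together with $p_{u\leadsto u|\pi'_u}=1$, the right-hand side telescopes to $\sum_{t\in\T_u} p_{u\leadsto t|\pi'_u}K_t$. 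Substituting $K_t=\mu r_t$ and recognising the resulting sum as $\mu z_u(\pi'_u;\bs p_u)$ by the definition of $z_u$ completes (\ref{eq:ineq1}). Inequality (\ref{eq:ineq2}) is entirely analogous, this time using $K\lceil x/K\rceil \le x + K$ on (\ref{eq:rec_rd}).

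The one subtlety, and the only place where care is really needed, is making sure the inductive chain for (\ref{eq:ineq1}) and (\ref{eq:ineq2}) uses the same direction of rounding in the children as at $u$: (\ref{eq:ineq1}) is proved using only the adversary recurrence all the way down, and (\ref{eq:ineq2}) uses only the decision recurrence; there is no interaction between the two. Everything else is a straightforward combination of the two elementary rounding bounds, the linearity of the recurrences in the child values, and the multiplicative structure of $p_{u\leadsto t}$, so I do not anticipate any serious obstacle.
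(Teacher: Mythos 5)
Your proposal is correct and follows essentially the same argument as the paper, which only gives a proof sketch: the floor/ceiling operation at each node $t$ perturbs the value by at most $K_t$, and this perturbation propagates to the root discounted by $p_{u\leadsto t}$, which is exactly your telescoping induction with the elementary bounds $x-K \le K\lfloor x/K\rfloor \le x$ and $x \le K\lceil x/K\rceil \le x+K$. Your write-up is simply the rigorous version of that sketch, including the correct handling of the leaf base case and the identity $\sum_{t\in\T_u} p_{u\leadsto t}\,\mu r_t = \mu z_u$.
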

\begin{myproof}[Proof sketch]
Intuitively, in~(\ref{eq:rec_ra}), the floor rounding operation at a node $t$ reduces the value by at most $K_t$, which is discounted by probability $p_{u\leadsto t | \pi^\prime}$.
Therefore, we have (\ref{eq:ineq1}) and (\ref{eq:ineq3}).
In~(\ref{eq:rec_rd}), the ceiling rounding operation at a node $t$ introduces an increment bounded by $K_t$, which is discounted by $p_{u\leadsto t | \pi^\prime}$.
Therefore, we have (\ref{eq:ineq2}) and (\ref{eq:ineq4}).
\end{myproof}

The rounded dynamic programming (RDP) algorithm works the same as the DP algorithm except that instead of keeping a list of $(z^a_u, z^d_u)$  in the table of $u$, a list of \emph{rounded pairs} denoted by $(\hat{z}^a_u, \hat{z}^d_u)$ are kept, which are calculated by recurrences~(\ref{eq:rec_ra}) and ~(\ref{eq:rec_rd}).
Each rounded pair is associated with the minimum cost to achieve it and the correspondent policy-parameter pair.
Intuitively, since multiple $z^a_u$s (or $z^d_u$s) are rounded into the same  $\hat{z}^a_u$ (or $\hat{z}^d_u$), the size of the table is reduced. 
It can be shown that RDP can find 
\begin{align}
(\pi^{\prime r}, \bs{p}^r)\in \arg\min_{\pi^\prime, \bs{p}} \frac{\hat{z}(\pi; \bs{p})}{\hat{z}(\pi^{\prime}; \bs{p})}
\end{align}
We show that $(\pi^{\prime r}, \bs{p}^r)$ is a good approximation to the optimal policy-parameter pair $(\pi^{\prime *}, \bs{p}^*)$.
That is, it is within $(1 + \epsilon)$ optimal if $\mu$ is set properly.
Specifically, 
\begin{thm}
If $\mu = \frac{\epsilon}{2 + \epsilon}$, we have
\begin{align*}
OPT = \frac{z(\pi; \bs{p}^*)}{z(\pi^{\prime *}; \bs{p}^*)} \leq \frac{z(\pi; \bs{p}^r)}{z(\pi^{\prime r}; \bs{p}^r)} \leq (1 + \epsilon) OPT
\end{align*}
\label{thm:appro_rate}
\end{thm}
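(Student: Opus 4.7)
The left inequality is immediate, so the real work is in the right inequality. I plan to chain four simple steps: (a) pass from true objective values at $(\pi^{\prime r},\bs{p}^r)$ to rounded ones, (b) invoke optimality of $(\pi^{\prime r},\bs{p}^r)$ in the rounded problem, (c) pass from rounded values at $(\pi^{\prime *},\bs{p}^*)$ back to true ones, and (d) verify that the algebraic slack collapses to exactly $(1+\epsilon)$ under the prescribed choice of $\mu$.

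For (a), I would use the two one-sided inequalities supplied by the previous theorem applied at the root $s$, namely $\hat{z}(\pi;\bs{p}^r)\ge z(\pi;\bs{p}^r)$ (ceiling rounding inflates the decision value) and $z(\pi^{\prime r};\bs{p}^r)\ge \hat{z}(\pi^{\prime r};\bs{p}^r)$ (floor rounding shrinks the adversary value). Dividing gives
\[
\frac{z(\pi;\bs{p}^r)}{z(\pi^{\prime r};\bs{p}^r)}\;\le\;\frac{\hat{z}(\pi;\bs{p}^r)}{\hat{z}(\pi^{\prime r};\bs{p}^r)}.
\]
Step (b) is then the definition of $(\pi^{\prime r},\bs{p}^r)$ as the minimizer of the rounded ratio, giving
\[
\frac{\hat{z}(\pi;\bs{p}^r)}{\hat{z}(\pi^{\prime r};\bs{p}^r)}\;\le\;\frac{\hat{z}(\pi;\bs{p}^*)}{\hat{z}(\pi^{\prime *};\bs{p}^*)}.
\]

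For (c) I plan to invoke the two-sided bounds from the rounding theorem: $\hat{z}(\pi;\bs{p}^*)\le(1+\mu)\,z(\pi;\bs{p}^*)$ (from inequality on the decision side) and $\hat{z}(\pi^{\prime *};\bs{p}^*)\ge(1-\mu)\,z(\pi^{\prime *};\bs{p}^*)$ (rearranging the adversary-side inequality). Combining these yields
\[
\frac{\hat{z}(\pi;\bs{p}^*)}{\hat{z}(\pi^{\prime *};\bs{p}^*)}\;\le\;\frac{1+\mu}{1-\mu}\cdot\frac{z(\pi;\bs{p}^*)}{z(\pi^{\prime *};\bs{p}^*)}\;=\;\frac{1+\mu}{1-\mu}\cdot OPT.
\]
Finally, step (d) is the elementary check that with $\mu=\frac{\epsilon}{2+\epsilon}$ one has $\frac{1+\mu}{1-\mu}=\frac{(2+\epsilon)+\epsilon}{(2+\epsilon)-\epsilon}=\frac{2+2\epsilon}{2}=1+\epsilon$, which closes the chain.

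The whole argument is essentially bookkeeping of the rounding directions; the only mild subtlety --- and what I would call out as the sole point needing care --- is the observation that the floor/ceiling directions in the recurrences (\ref{eq:rec_ra})--(\ref{eq:rec_rd}) were chosen precisely so that the rounded ratio upper-bounds the true ratio pointwise in $(\pi^\prime,\bs{p})$, so that optimality of $(\pi^{\prime r},\bs{p}^r)$ in the rounded problem transfers correctly after the passage through $(\pi^{\prime *},\bs{p}^*)$. Once that monotonicity is recognized, no combinatorial argument is needed; the approximation ratio is forced by the algebra of step (d).
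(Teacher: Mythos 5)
Your proposal is correct and follows essentially the same route as the paper's own proof: it chains the pointwise inequality $z(\pi;\bs{p}^r)/z(\pi^{\prime r};\bs{p}^r)\le\hat{z}(\pi;\bs{p}^r)/\hat{z}(\pi^{\prime r};\bs{p}^r)$ from (\ref{eq:ineq3})--(\ref{eq:ineq4}), the optimality of $(\pi^{\prime r},\bs{p}^r)$ for the rounded ratio, and the $(1+\mu)/(1-\mu)$ bound from (\ref{eq:ineq1})--(\ref{eq:ineq2}), exactly as the paper does, differing only in the order in which the three links are presented. The algebraic check $\frac{1+\mu}{1-\mu}=1+\epsilon$ for $\mu=\frac{\epsilon}{2+\epsilon}$ matches the paper's computation.
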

\begin{proof}
By~(\ref{eq:ineq1}) and (\ref{eq:ineq2}), for any $(\pi^\prime, \bs{p})$, we have
{\small \begin{align*}
\frac{\hat{z}(\pi; \bs{p})}{\hat{z}(\pi^\prime; \bs{p})} \leq \frac{(1 + \mu) z(\pi; \bs{p})}{(1 - \mu) z(\pi^\prime; \bs{p})} = (1 + \epsilon) \frac{z(\pi; \bs{p})}{ z(\pi^\prime; \bs{p})}
\end{align*}}
Since $(\pi^{\prime r}, \bs{p}^r)$ produces the minimum ratio for rounded value functions~(\ref{eq:rec_rd}) and~(\ref{eq:rec_ra}), we have
\begin{align*}
 \frac{\hat{z}(\pi; \bs{p}^r)}{\hat{z}(\pi^{\prime r}; \bs{p}^r)} \leq \frac{\hat{z}(\pi; \bs{p}^*)}{\hat{z}(\pi^{\prime *}; \bs{p}^*)} \leq (1 + \epsilon) \frac{z(\pi; \bs{p}^*)}{ z(\pi^{\prime *}; \bs{p}^*)}
\end{align*}
By~(\ref{eq:ineq3}) and (\ref{eq:ineq4}), we have
\begin{align*}
\frac{z(\pi; \bs{p}^r)}{z(\pi^{\prime r}; \bs{p}^r)} \leq \frac{\hat{z}(\pi; \bs{p}^r)}{\hat{z}(\pi^{\prime r}; \bs{p}^r)}
\end{align*}
Thus, the theorem is proved.
\end{proof}

\subsubsection{Runtime Analysis}
In Theorem~\ref{thm:appro_rate}, we see that the $K_u$ values of affect the approximation rate.
Now, we analyze the dependence of the RDP algorithm running time on these values.
First, we make the following assumption.
\begin{ampn}
There are two constants $m$ and $M$ independent of $|V|$ such that $m \leq r_u \leq M$ for all $u \in V$. 
\end{ampn}
The assumption is reasonable because rewards represent habitat areas of stream segments, which do not increase or decrease as the number of segments increases. 

Let the number of different values of $\hat{z}^a_u$ and $\hat{z}^d_u$ in the table at $u$ be $m^a_u$ and $m^d_u $. We have 
\begin{lemma}
If $K_u = \mu r_u$, we have
\begin{align*}
 m^a_u = O\Big(\frac{n_u}{\mu}\Big), ~~~~~ m^d_u = O\Big(\frac{n_u}{\mu}\Big)
\end{align*}
where $n_u$ is the number of nodes in subtree $\T_u$.
\label{lemma:num_values}
\end{lemma}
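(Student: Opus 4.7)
The plan is to count the distinct values that can appear in a DP entry by showing each rounded value lies on a coarse regular grid and in a bounded interval. The bound $O(n_u/\mu)$ then follows by dividing the length of the interval by the grid spacing.

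First I would observe the ``grid'' property: by the recurrence (\ref{eq:rec_ra}), every value $\hat{z}^a_u$ produced at node $u$ has the form $K_u \lfloor \cdot / K_u \rfloor$, so it is a nonnegative integer multiple of $K_u$. By (\ref{eq:rec_rd}), the same is true for $\hat{z}^d_u$ (with ceiling instead of floor). It does not matter that child values $\hat{z}^a_v, \hat{z}^a_w$ live on the $K_v, K_w$ grids, because the outer $K_u \lfloor \cdot /K_u \rfloor$ collapses the result onto the $K_u$ grid. This is the step I would state carefully; the rest is just arithmetic.

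Next I would bound the range. Because all probabilities lie in $[0,1]$, $z_u(\pi'_u;\bs{p}_u) = \sum_{t\in \T_u} p_{u\leadsto t|\pi'_u} r_t \leq \sum_{t\in \T_u} r_t \leq n_u M$ using Assumption 2. Combining with (\ref{eq:ineq3}) gives $\hat{z}^a_u \in [0, n_u M]$. For the decision side, (\ref{eq:ineq2}) yields $\hat{z}^d_u \leq (1+\mu) z_u(\pi_u;\bs{p}_u) \leq (1+\mu) n_u M$.

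Finally, the number of multiples of $K_u = \mu r_u$ in an interval of length $L$ is at most $L/K_u + 1$. Using $r_u \geq m$ from Assumption 2, this gives
\begin{align*}
m^a_u \leq \frac{n_u M}{\mu r_u} + 1 \leq \frac{n_u M}{\mu m} + 1 = O\Big(\frac{n_u}{\mu}\Big),
\end{align*}
and analogously $m^d_u \leq (1+\mu) n_u M/(\mu m) + 1 = O(n_u/\mu)$, since $m, M$ are constants independent of $|V|$. No step is really a serious obstacle; the only thing that might be worth emphasizing is that the grid alignment is preserved exactly by the outer rounding operator, which is why the simple counting works without having to track how values propagate from children.
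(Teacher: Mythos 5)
Your proof is correct and follows essentially the same route as the paper's: bound the rounded values above by $n_u M$ (adversary) and $(1+\mu)n_u M$ (decision), note they lie on the grid of multiples of $K_u = \mu r_u \geq \mu m$, and divide. The only difference is that you make explicit the grid-alignment step that the paper's one-line count leaves implicit; that is a worthwhile clarification but not a different argument.
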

\begin{proof}
Since $\hat{z}(\pi^\prime_u; \bs{p}_u)$ is upper-bounded by $z(\pi^\prime_u; \bs{p}_u) \leq n_u \cdot M$, the number of different rounded values with $K_u$ is $m^a_u \leq \frac{n_u \cdot M}{K_u} \leq \frac{n_u \cdot M}{\mu m} = O(\frac{n_u}{\mu})$.
Similarly, $\hat{z}(\pi; \bs{p})$ is upper-bounded by $(1 + \mu) z(\pi_u; \bs{p}_u) \leq (1 + \mu)n_u M$, so $m^d_u = O(\frac{n_u}{\mu})$ as well.
\end{proof}

Define $T(n_u)$ to be the running time for subtree $u$, which is calculated by recurrence
\begin{align*}
T(n_u) &= O(m^a_v m^d_v m^a_w m^d_w) + T(n_v) + T(n_w)
\end{align*}
Together with Lemma \ref{lemma:num_values}, it can be shown that 
\begin{thm}
$T(n_u) = O(\frac{n_u^4}{\mu^2})$.
\label{thm:time}
\end{thm}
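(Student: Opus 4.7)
The plan is a strong induction on the size $n_u$ of the subtree $\T_u$, using the recurrence for $T(n_u)$ stated above together with the cardinality bound from Lemma~\ref{lemma:num_values}. For the base case, a leaf has $n_u = 1$ and no children, so the recurrence collapses to $T(1) = O(1)$, which trivially satisfies $O(n_u^4/\mu^2)$.

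For the inductive step at an internal node $u$ with children $v$ and $w$ (at most two, by Assumption~1), I would substitute $m^a_v, m^d_v, m^a_w, m^d_w = O(n_u/\mu)$ into the combining term and then apply the inductive hypothesis to $T(n_v)$ and $T(n_w)$. The remaining task is purely algebraic: show that the resulting expression is bounded by $C\,n_u^4/\mu^2$ for a suitable universal constant $C$. The key inequality I would invoke is the binomial expansion $(n_v + n_w)^4 \geq n_v^4 + n_w^4 + 6\,n_v^2 n_w^2$ together with $n_u \geq n_v + n_w + 1$, so that the $n_u^4$ budget has enough headroom to absorb both recursive contributions $n_v^4, n_w^4$ and the quartic combining overhead $n_v^2 n_w^2$ with room to spare for the constant $C$.

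The step I expect to be the main obstacle is controlling the exponent of $\mu$: a literal product of the four individual Lemma~\ref{lemma:num_values} bounds gives $m^a_v m^d_v m^a_w m^d_w = O(n_v^2 n_w^2 / \mu^4)$, which would only yield $T(n_u) = O(n_u^4/\mu^4)$ rather than the claimed $O(n_u^4/\mu^2)$. To close this gap I would argue that the DP table at $u$ does not realise the full Cartesian product of distinct $\hat z^a_u$ and $\hat z^d_u$ values, because both coordinates are coupled through the shared probability vector $\bs{p}_u$. Exploiting this coupling, the effective table size is $O(n_u^2/\mu)$ rather than $O(n_u^2/\mu^2)$, so the combining cost at $u$ becomes $O(n_v^2 n_w^2/\mu^2)$. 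Once this sharper table-size bound is in hand, the induction closes cleanly at $O(n_u^4/\mu^2)$, and setting $u = s$ (the root) together with $\mu = \epsilon/(2+\epsilon)$ from Theorem~\ref{thm:appro_rate} recovers the overall $O(n^4/\mu^2)$ runtime promised in Theorem~\ref{thm:main}.
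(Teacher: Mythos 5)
Your induction scaffold---strong induction on $n_u$, the base case $T(1)=O(1)$, and the binomial headroom $(n_v+n_w)^4 \ge n_v^4+n_w^4+6\,n_v^2n_w^2$ used to absorb the combining term into the $n_u^4$ budget---is exactly the paper's argument: the appendix proof closes the same algebra by bounding the combining cost plus the two recursive calls by $\frac{c'}{\mu^2}\bigl(k^2+(n_u-k-1)^2\bigr)^2 \le \frac{c'}{\mu^2}\,n_u^4$. (Your variant with one universal constant fixed in advance is, if anything, cleaner than the paper's per-node choice $c'=\max\{c_1,c_2,c\}$, which as written lets the constant grow with recursion depth.)

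The genuine gap is exactly the step you flagged and then tried to patch. A literal substitution of Lemma~\ref{lemma:num_values} into $O(m^a_v m^d_v m^a_w m^d_w)$ gives $O(n_v^2 n_w^2/\mu^4)$; the paper's proof simply writes $c\,n_v^2 n_w^2/\mu^2$ at this point with no justification, so the discrepancy you noticed is real and is not resolved by the paper either. However, your proposed repair---that coupling through the shared $\bs{p}_u$ forces the table at $u$ to contain only $O(n_u^2/\mu)$ entries---is asserted rather than proved, and I do not see why it should hold. For a fixed $\bs{p}_v$ the decision value $\hat{z}^d_v$ is determined, but the adversary value $\hat{z}^a_v$ still ranges over the choices of $\pi'_v$, and distinct probability vectors yield distinct values of $\hat{z}^d_v$; nothing prevents the set of achievable pairs from having size $\Theta(m^a_v m^d_v)=\Theta(n_v^2/\mu^2)$, and the combining loop enumerates every stored pair at $v$ against every stored pair at $w$, so that product is what the algorithm actually pays. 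Without an actual proof of the sharper table-size bound, your own accounting yields only $T(n_u)=O(n_u^4/\mu^4)$, and the induction as you set it up does not deliver the stated $O(n_u^4/\mu^2)$.
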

Thus, the running time of the RDP algorithm is $O(\frac{n^4}{\mu^2})$ where $n$ is the number of nodes in the directed rooted tree.
Combining Theorems~\ref{thm:appro_rate} and~\ref{thm:time}, Theorem~\ref{thm:main} is proved.

\section{Other Criterion of Robustness}
A slightly different way to quantify robustness is to use \emph{regret}~\cite{kumar2016robust,boutilier2003constraint}.
The policy that minimizes the regret is defined by 
\begin{align}
\label{eq:mmregret}\pi^{\mr} \in \arg\min_{\pi: c(\pi) \leq \Bu} \max_{\pi^\prime: c(\pi^\prime)\leq \Bu} z(\pi^\prime; \bs{p}) -  z(\pi; \bs{p})
\end{align}

The robust ratio and the regret are correlated as
\begin{align*}
\frac{z(\pi; \bs{p})}{z(\pi^\prime; \bs{p})} = 1 - \frac{z(\pi^\prime; \bs{p}) -  z(\pi; \bs{p})}{z(\pi^\prime; \bs{p})}
\end{align*}
The robust ratio is in some way the scaled version of the regret.
In experiments, we show that $\pi^{\mrr}$ also produces small regret compared to policies computed by other baseline methods.
Our algorithm with minor modifications can find a nearly optimal $\pi^{MR}$ empirically.

\section{Experiments}

We use data from the CAPS project~\cite{McGarigal11} for the river networks in Massachusetts and synthetically define the missing parameters from the data.
The data provides the point estimates of the initial passability probabilities.
We use the method in~\cite{kumar2016robust} to define the intervals of initial passage probabilities before taking actions.
The interval of an initial passage probability is $[p - \beta  p ,   p + \beta  p]$ where $p$ is an point estimate and $\beta$ is a parameter controlling the interval sizes.

\begin{figure}[t]
\vspace{-8pt}
\centering
\subfloat[\small RDP for robust ratio]{\includegraphics[height=32mm]{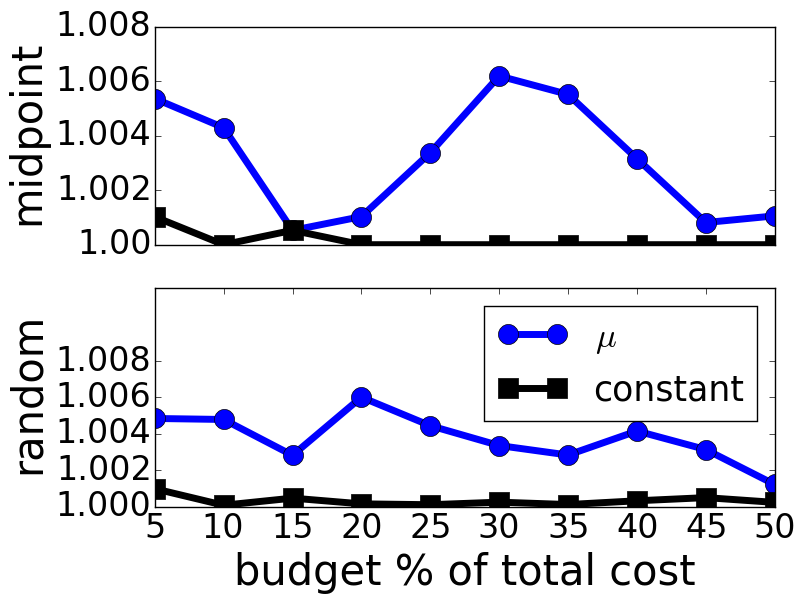}}
\subfloat[\small RDP for regret]{\includegraphics[height=32mm]{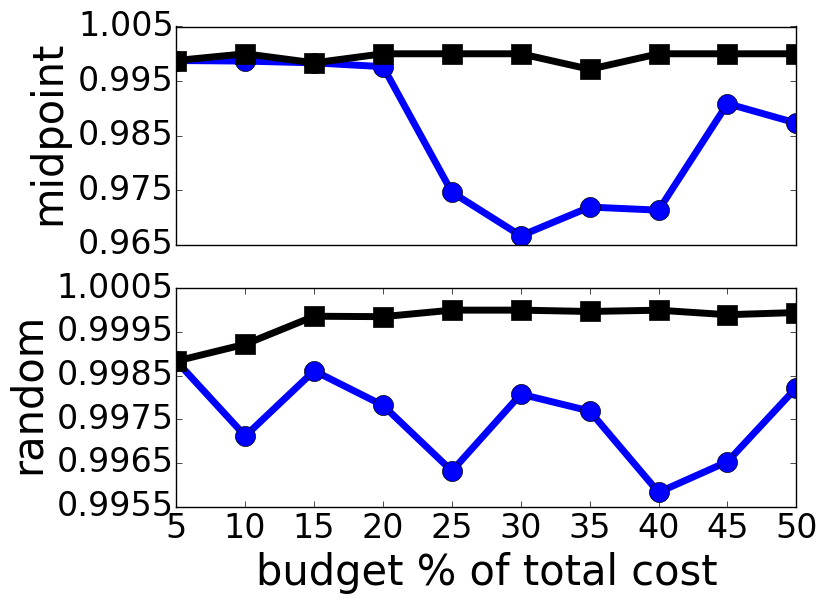}}
\caption{Approximate qualities for different algorithm configurations with $\beta = 0.3$. X-axis: budget sizes. Y-axis: $\frac{value}{OPT}$ where $OPT$ is the optimal value produced by the DP algorithm.
Value of random policies is an average of $10$ runs.}
\label{fig:appr_small}
\end{figure}

\begin{figure}[t]
\centering
\includegraphics[height=32mm]{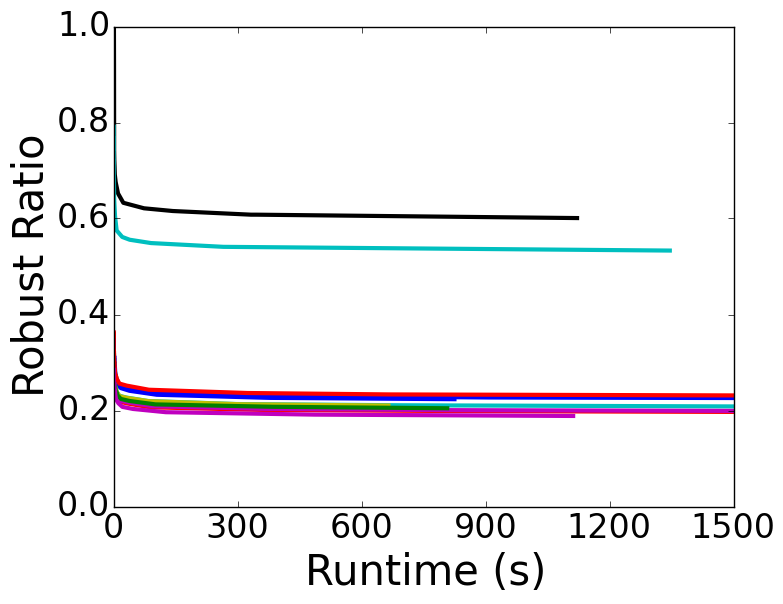}
\caption{Robust ratio for different $K$ values with $\beta {=} 0.3$. 
From top to bottom, curves are for ``midpoint" and 
``worst" policies, and $10$ random policies.}
\label{fig:appr_large}
\end{figure}

The data contains two types of barriers: culverts and dams.
The point estimates for culverts provided by the data are mostly in the range $[0.8, 0.9]$.
A typical action that removes a culvert raises its passage probability to $1.0$ and costs \$100,000.
Most of the point estimates for dams are less than $0.2$. 
A typical action to repair a dam costs \$173,030, and shifts its probability interval to $[p' - \beta  p' , p' + \beta  p']$ where $p' = p + \text{a random value in [0.5, 0.9]}$ .
The cost estimates are based on a study by~\citeauthor{neeson2015enhancing} (2015).
All intervals are truncated to fit within $[0, 1.0]$.

\begin{figure}[t]
\vspace{-8pt}
\centering
\subfloat[\small Robust Ratio]{\includegraphics[height=32mm]{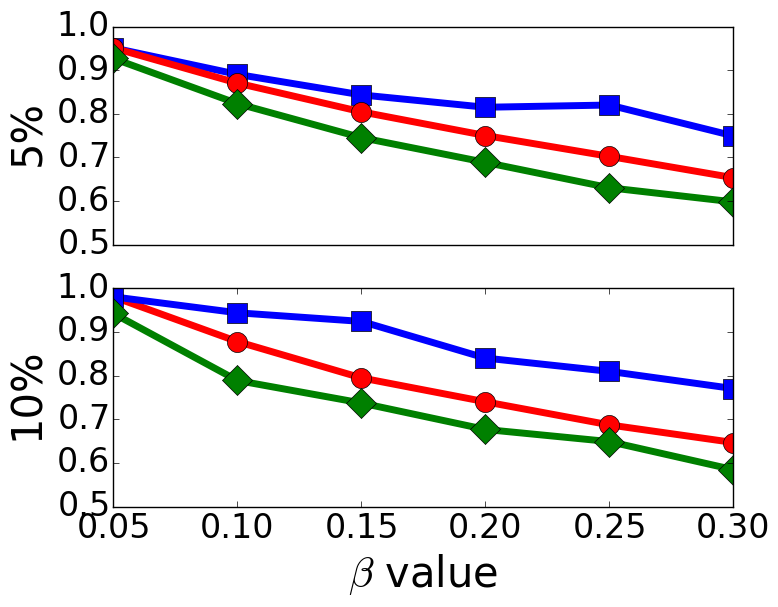}}
\subfloat[\small Regret]{\includegraphics[height=32mm]{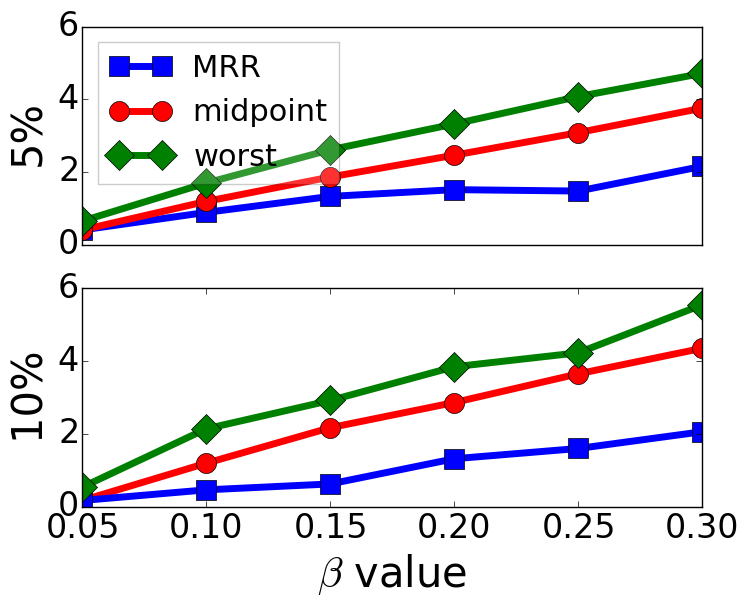}}
\caption{Robust ratio and regret ($\times 10^5$) for three type of policies under different $\beta$ and budget sizes of $5\%$ and $10\%$ .}
\label{fig:robust_ratio}
\end{figure}

We compare our algorithm against two baseline methods:
a ``midpoint" policy is obtained by solving problem~(\ref{eq:maxex}) and assuming true passage probabilities being the mid-point values of the intervals;
a ``worst" policy is obtained by solving problem~(\ref{eq:maxex}) and conservatively assuming true passage probabilities being the lower bounds of the intervals.
The policy calculated by our algorithm is the ``MRR" policy.

\subsubsection{Approximate Rate of the RDP Algorithm}
First, we evaluate the approximation rates of the RDP algorithm for problem~(\ref{eq:step3}), and of a \emph{modified RDP algorithm} for solving the inner maximization problem of~(\ref{eq:mmregret}) on a small network of only $22$ nodes.
The DP algorithm runs out of memory on networks of larger sizes. 
The results are shown in Fig.\,\ref{fig:appr_small}.
We set $K_u$ in two different ways---$\epsilon = 0.1$ (denoted by ``$\mu$") and  $K_u = 5$ (denoted by ``constant").
Setting $K_u = 5$ makes the algorithm about $20$ times faster than setting $\mu = 0.1$ and $100$--$600$ times faster than DP.
Note that robust ratios produced by our algorithm are greater than $OPT$ and regrets are smaller than $OPT$.
From the figures, we see that the (modified) RDP algorithm produces nearly optimal policy-parameter pairs.
In the rest of experiments, we do not show the results of the modified algorithm to solve problem~(\ref{eq:mmregret}).

We test on a larger network of $2028$ culverts and $166$ dams to see what value of $K$, when we set $K_u{=}K$, is sufficiently large for the RDP algorithm to produce good robust ratios.
The optimal objective value is not available on this network.
The results are shown in Fig.\,\ref{fig:appr_large}.
We see that robust ratios converge within 2 minutes for all testing policies, and random policies are much worst than two baseline policies.
The value of $K$ in the convergence area implies that it is sufficient to produce near-optimal solutions.

\begin{figure}[t]
\centering
\subfloat{\includegraphics[height=4mm, width=84mm]{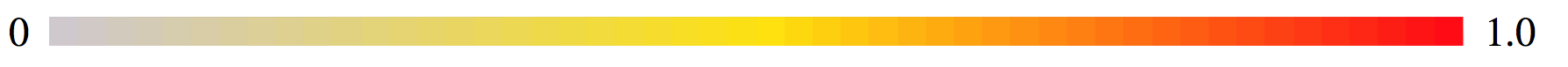}}\\
\vspace{-8pt}
\subfloat[\small ``midpoint" ($1.3 {\times} 10^6$)]{\includegraphics[height=40mm]{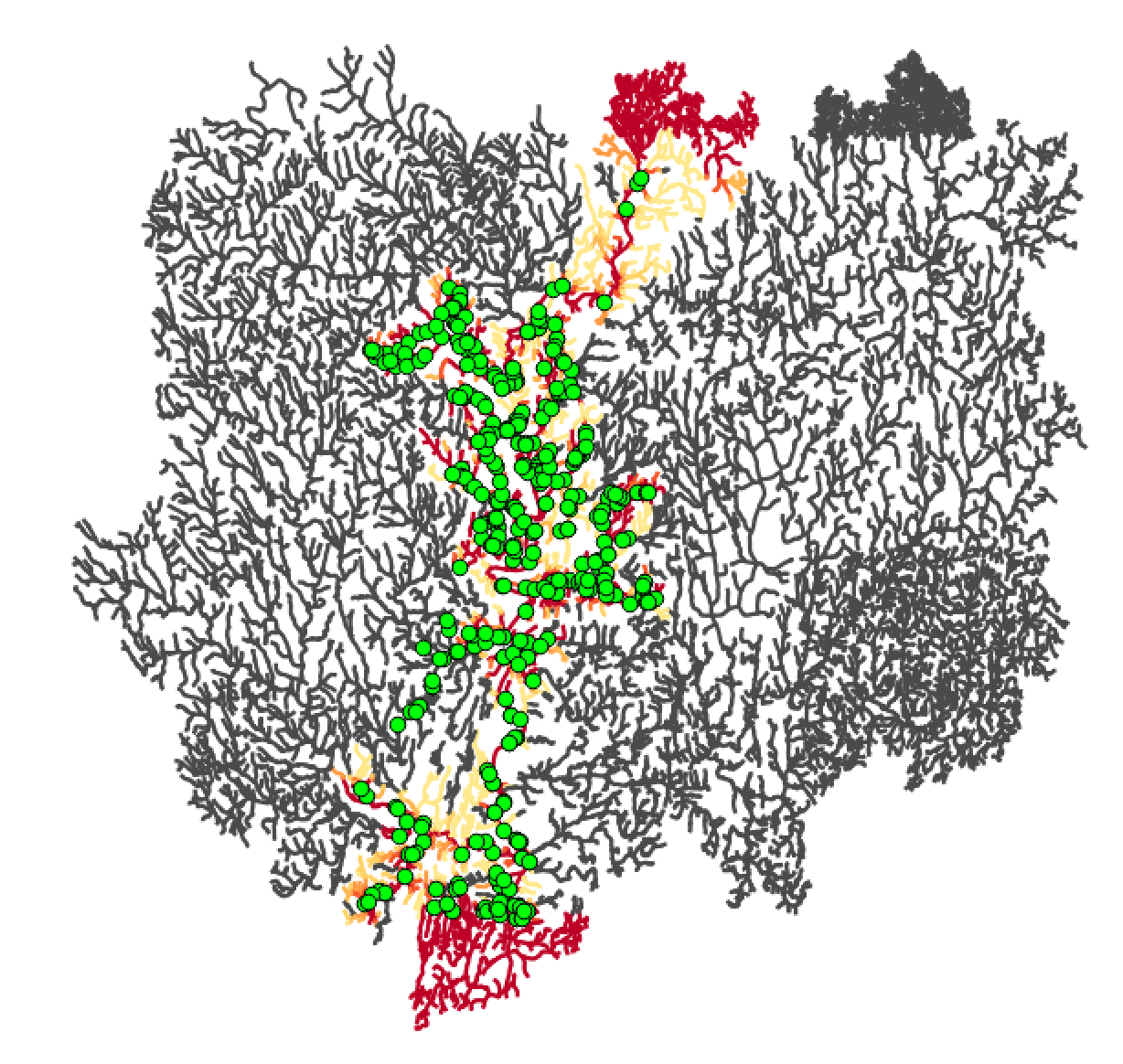}}
\subfloat[\small ``midpoint" adv.~($2.7 {\times} 10^6$)]{\includegraphics[height=40mm]{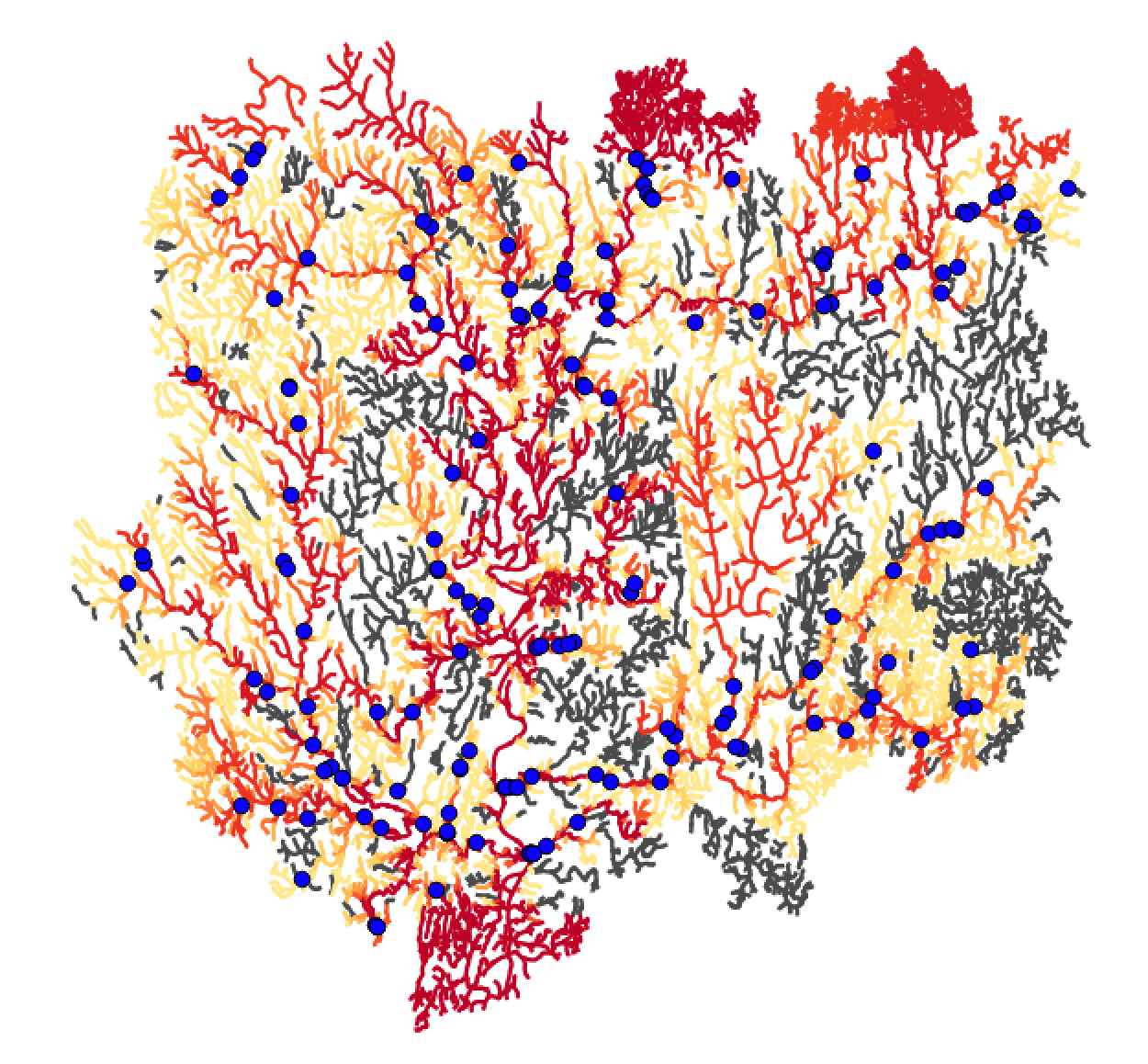}}\\
\subfloat[\small ``MRR" ($1.7 {\times} 10^6$)]{\includegraphics[height=40mm]{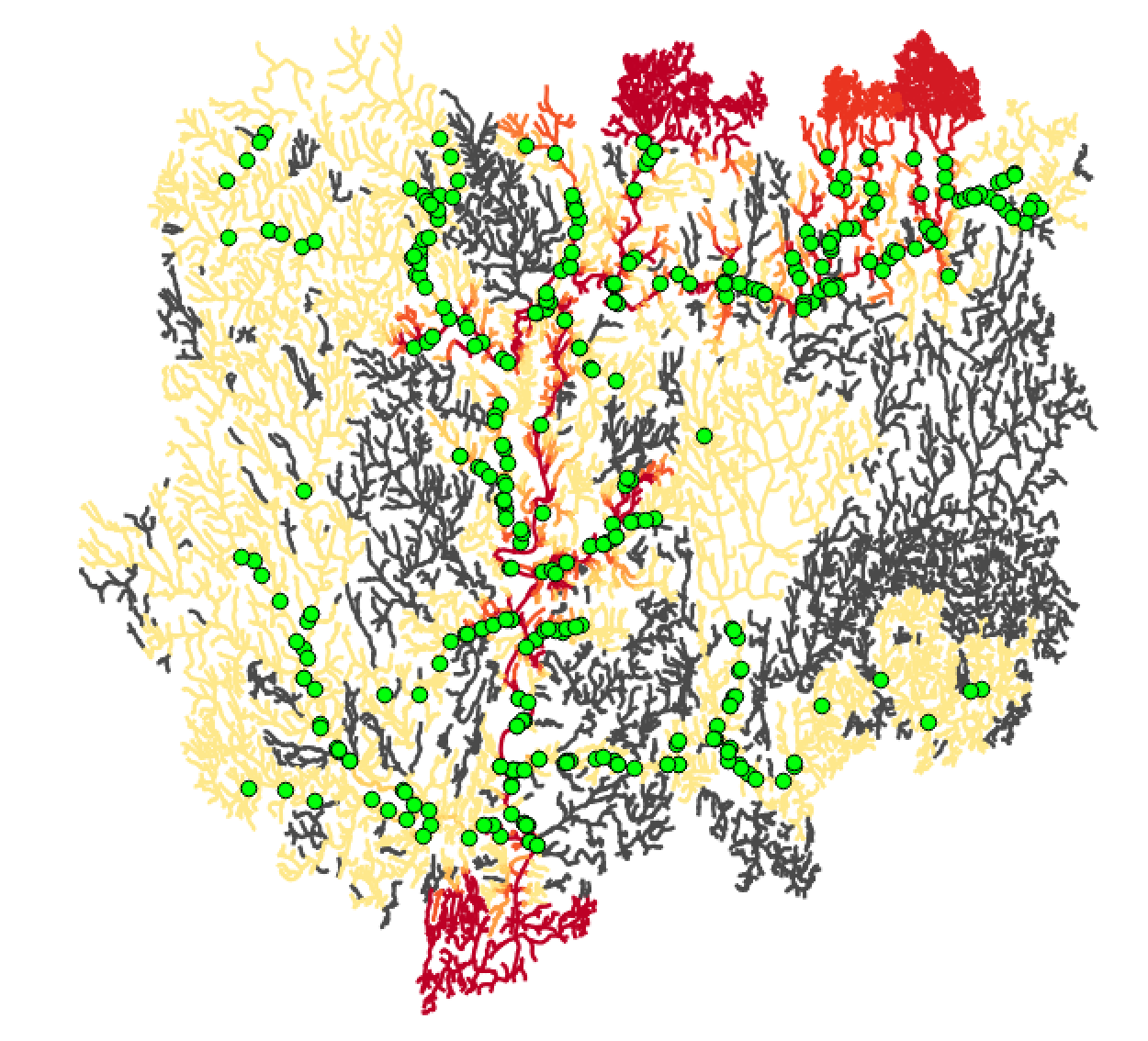}}
\subfloat[\small ``MRR" adv.~($2.2 {\times} 10^6$)]{\includegraphics[height=40mm]{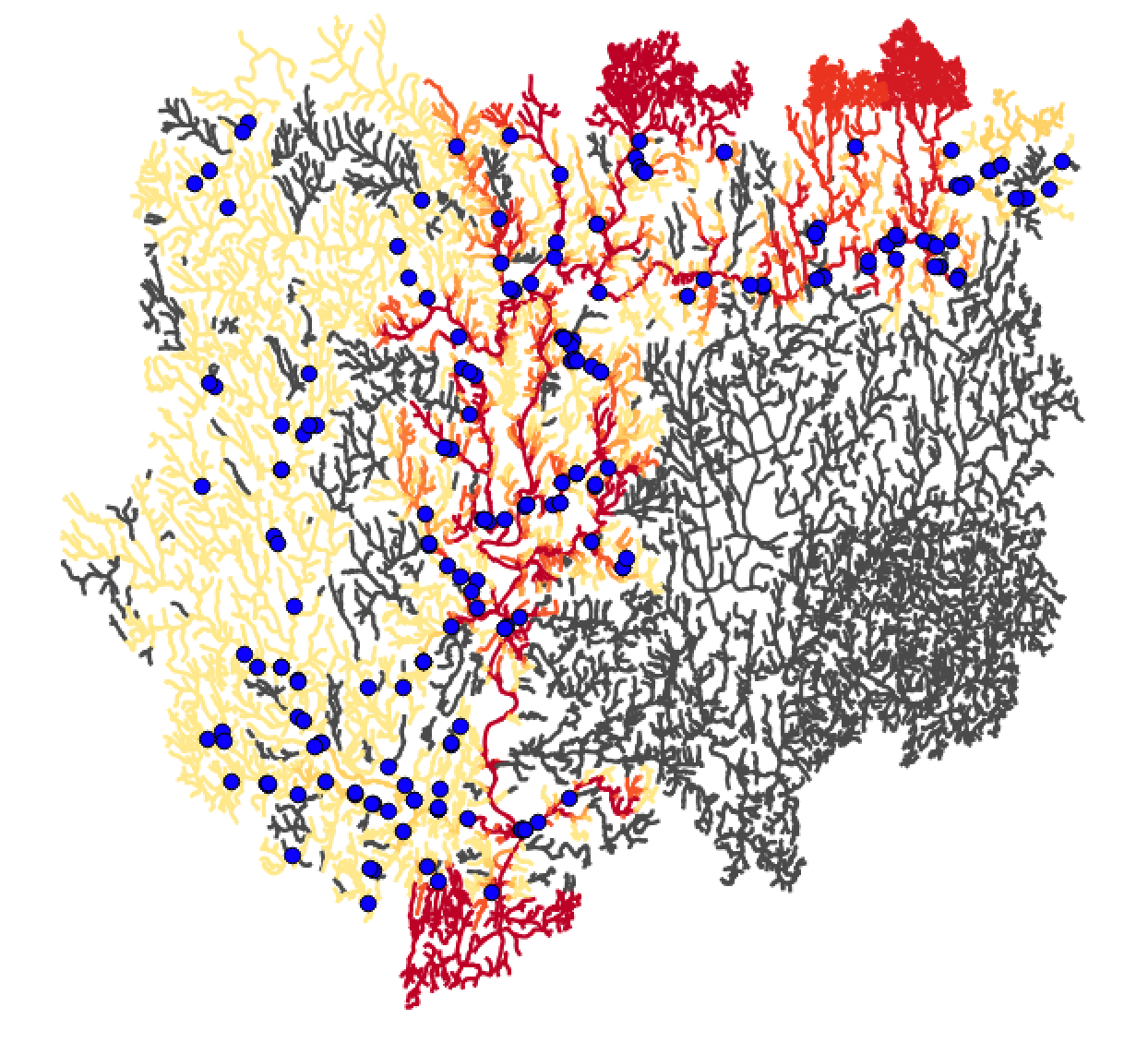}}
\caption{Visualization of four policies for $\beta{=}0.3$. Values shown in parenthesis. The accessibilities of edges are colored according to the top color bar. Dots represent removed (repaired) barriers. The adversarial midpoint and MRR policies are computed by RDP. }
\label{fig:visualize_policy_large}
\end{figure}

\subsubsection{Robustness Comparison}
On the same network, we compare the robustness of three policies using the value of $K$ in the convergence area.
Fig.\,\ref{fig:robust_ratio} shows how the robust ratio and regret computed by ``MRR" change as the size of intervals (i.e., $\beta$) varies.
Budget sizes are relative to the cost of removing all barriers.
We see that as $\beta$ increases, the robust ratio decreases and the regret increases almost linearly.
``MRR" gives the largest robust ratio.
Although ``MRR" maximizes the robust ratio, it produces the smallest regret, 
implying that the two robustness metrics are correlated.

Finally, we test our algorithms on a large network of $9335$ nodes, $7566$ culverts and $596$ dams with $5\%$ budget.
In this very difficult setting, we obtain results similar to those shown in Fig.\,\ref{fig:robust_ratio} even without using the value of $K$ in the convergence area.
Due to the limitation of space, we do not show those similar figures here, but only visualize the computed policies in Fig.\,\ref{fig:visualize_policy_large}.
The ``midpoint" policy allocates most of the budget around the main stream, near the middle vertical line of the river.
The adversarial policy can easily achieve much better value by taking actions in other important areas and assigns high probabilities if actions are taken (e.g., the adversarial policy) and low probabilities if actions are not taken (e.g., the decision policy.)
In contrast, the ``MRR" policy is more robust by allocating the budget to several important areas so that the adversarial policy cannot use the same trick to achieve much better value.

\section{Conclusion}
We describe an approximate robust optimization algorithm for a tree-structured stochastic network design problem, which is motivated by the river network design problem for fish conservation.
The algorithm iteratively solves two optimization problem: the decision optimization problem and the ratio minimization problem.
The former is encoded into a MILP, and an FPTAS is developed for the latter, which is the harder problem.
Empirically, we show that the policies computed by maximizing the robust ratio are more robust than policies computed by two other baseline methods.
Besides finding policies of high robust ratio, our algorithm can also produce policies with small regret on large-scale networks. 
These algorithms provide new computational tools for environmental scientists who tackle decision problems with imprecise models.

\section*{Acknowledgments}
This work was partially funded by a UMass Graduate School Dissertation Writing Fellowship awarded to the first author. Second author is supported by the research center at the School of Information Systems at the Singapore Management University.

\bibliographystyle{aaai} 
\bibliography{aaai2017}

\section*{Appendix}

\noindent\B{Lemma 1.} \emph{There exists an optimal policy-parameter pair $(\pi^{\prime*}, \bs{p}^*)$ for Problem~(\ref{eq:step3}) with the following property.
Suppose $\pi^{\prime*}$ takes action $i$ and the decision policy $\pi$ takes action $j$ on edge $e$.
If $j\neq i$, then $p^*_{e|i} = \ub{p}_{e|i}$ and $p^*_{e|j} = \lb{p}_{e|j}$.
Otherwise, $p^*_{e|i}$ is either $\lb{p}_{e|i}$ or $\ub{p}_{e|i}$.}
\begin{proof}
Suppose $\pi^{\prime *}$ takes action $i$ and the decision policy $\pi$ takes action $j$ on edge $e$.
Let us first consider the case when $j \neq i$.
Since only $\pi$ takes action $j$ on $e$, the term $p_{e|j} $ only appears in the numerator of the robust ratio in (4), and $p_{e|i}$ only appears in the denominator.
Then, the minimization w.r.t. $\bs{p}$ in (4) can be written as
\begin{align*}
\min_{\bs{p}} \frac{a p_{e|j} + b}{c p_{e|i} + d}
\end{align*}
where $a,b,c,d$ are constants w.r.t. $p_{e|i}$ and $p_{e|j}$.
Since all rewards and probabilities are nonnegative, coefficients $a$ and $c$ are nonnegative. The optimal probability setting will satisfy $p^*_{e|i} = \ub{p}_{e|i}$ and $p^*_{e|j} = \lb{p}_{e|j}$, which proves the first part of the lemma.

Let us consider the case when $j = i$.
Now, $p_{e|i}$ will appear in both numerator and denominator.
In this case, we have
{\small \begin{align*}
& \min_{\bs{p}} \frac{a p_{e|i} + b}{c p_{e|i} + d} \\
= & \min_{\bs{p}}\frac{\frac{a}{c}(c p_{e|i} + d) + b - \frac{ad}{c}}{c p_{e|i} + d} \\
= & \min_{\bs{p}}\frac{a}{c} + \frac{b - \frac{ad}{c}}{c p_{e|i} + d}
\end{align*}}
where $a,b,c,d$ are nonnegative constants w.r.t. $p_{e|i}$.
If $b \geq \frac{ad}{c}$, the optimal probability setting will set $p_{e|i} = \ub{p}_{e|i}$. Otherwise, it will set $p_{e|i} = \lb{p}_{e|i}$. 
In summary, $p^*_{e|i}$ is either the upperbound or the lowerbound, which proves the second part.
\end{proof}

\vspace{10pt}
\noindent\B{Corollary 1.} \emph{For a fixed $\pi$, only $|A_e| + 1$ actions in $A^s_e$ are needed to compute $(\pi^{\prime*}, \bs{p}^*)$.} 
\begin{proof}
Due to Lemma 1, if $\pi'$ and $\pi$ take different actions, there is only one possible probability setting that we need to consider.
If they take the same action (say $i$), there are two cases $p_{e|i} = \lb{p}_{e | i}$ or $p_{e|i} = \ub{p}_{e | i}$ while the probabilities of other actions than $i$ can be chosen arbitrarily and don't affect the objective value of both the decision policy and the adversarial policy.
\end{proof}

\vspace{10pt}
\noindent\B{Theorem 4. } \emph{$T(n_u) = O(\frac{n_u^4}{\mu^2})$.}
\begin{proof}
We have
{\small \begin{align*}
T(n_u) &= O(m^a_v m^d_v m^a_w m^d_w) + T(n_v) + T(n_w) \\
& \leq c\frac{n_v^2 n_w^2}{\mu^2} + T(n_v) + T(n_w) \\
& \leq \max_{0\leq k \leq n_u - 1} c\frac{k^2 (n_u - k - 1)^2}{\mu^2} + T(k) + T(n_u - k - 1)
\end{align*}
}
where $n_u, n_v, n_w$ are the numbers of nodes in subtree at $u,v,w$ and $n_u = n_v + n_w + 1$.

To show that $T(n_u) = O(\frac{n_u^4}{\mu^2})$, we use induction.
For the base case, the DP table at a leave node has only one tuple, so $T(1) = O(1) = O(\frac{1}{\mu^2})$ as $\mu < 1$.
To do the induction, let $v$ and $w$ be the two children of $u$ and assume that $T(n_v) = c_1\frac{n^4_v}{\mu^2}$ and $T(n_w) = c_2\frac{n^4_w}{\mu^2}$.
Let $c' = \max\{c_1, c_2, c\}$ where $c$ is the constant in previous inequalities. Continuing the derivation of $T(n_u)$, we have
{\small 
\begin{align*}
T(n_u) &\leq \frac{c'}{\mu^2}\max_{0\leq k \leq n_u - 1} 2k^2 (n_u - k - 1)^2 + k^4 + (n_u - k - 1)^4 \\
& \leq \frac{c'}{\mu^2}\max_{0\leq k \leq n_u - 1} (k^2 + (n_u - k - 1)^2)^2 \\
& \leq \frac{c'}{\mu^2}\max_{0\leq k \leq n_u - 1} (k^2 + 2k (n_u - k - 1) + (n_u - k - 1)^2)^2 \\
& \leq \frac{c' n_u^4}{\mu^2}
\end{align*} 
}
Thus, we have shown that $T(n_u) = O(\frac{n^4_u}{\mu^2})$.
\end{proof}

\end{document}